\theoremstyle{plain}
\newtheorem{condition}{Condition}
\newtheorem{definition}{Definition}
\newtheorem{lemma}{Lemma}
\newtheorem{example}{Example}
\newtheorem{remark}{Remark}
\newcommand{\node}[1]{\mathsf{#1}}
\newcommand{\set}[1]{\mathbf{#1}}
\newcommand{\setset}[1]{\mathcal{#1}}
\newcommand{\graph}{\mathcal{G}}
\newcommand{\descendant}{\text{De}_{\graph}}
\newcommand{\parents}{\text{Pa}_{\graph}}
\newcommand{\children}{\text{Ch}_{\graph}}
\newcommand{\purechildren}{\text{PCh}_{\graph}}
\title{On the Parameter Identifiability   
of Partially  Observed  Linear Causal Models}
\author{
    Xinshuai Dong$^{1}$*
    \quad
    Ignavier Ng$^{1}$*
    \quad
    Biwei Huang$^{2}$
    \quad
    Yuewen Sun$^{3}$ \quad
     \textbf{Songyao Jin}$^{3}$\\
     \textbf{Roberto Legaspi}$^{4}$
     \quad
     \textbf{Peter Spirtes}$^{1}$
    \quad   
    \textbf{Kun Zhang}$^{1,3}$ \\
        $^1$Carnegie Mellon University~
        $^2$University of California San Diego~
        \\        
    $^3$Mohamed bin Zayed University of Artificial Intelligence~
    $^4$KDDI Research
}
\begin{document}

\vspace{-0em}
\maketitle

\begin{abstract}
Linear causal models are important tools for modeling causal dependencies and yet in practice, only a subset of the variables can be observed.
In this paper, 
we
examine the parameter identifiability of these models by investigating whether the edge coefficients can be recovered given the causal structure and partially observed data.
Our setting is more general than that of prior research---we allow all variables, including both observed and latent ones, to be flexibly related, and we consider the coefficients of all edges, whereas most existing works focus only on the edges between observed variables.
Theoretically, we identify three types of indeterminacy for the parameters in partially observed linear causal models. We then provide graphical conditions that are sufficient for all parameters to be identifiable and show that some of them are provably necessary.
Methodologically, we propose a novel likelihood-based parameter estimation method that addresses the variance indeterminacy in a specific way and can asymptotically recover the underlying parameters up to trivial indeterminacy.
Empirical studies on both synthetic and real-world datasets validate our identifiability theory and the effectiveness of 
the proposed method in the finite-sample regime.
Code: \url{https://github.com/dongxinshuai/scm-identify
}.
\looseness=-1
\end{abstract}

\vspace{-0.5em}
\section{Introduction and Related Work}
\label{sec:introduction}
\vspace{-0.5em}
Causal models, which serve as a fundamental tool to capture causal relations among random variables, have achieved great success in many fields \citep{spirtes2000causation,pearl2009causal,peters2017elements,scholkopf2019causality}.
A fundamental problem in the field is how and to what extent can we identify the underlying causal model given observational data.
When all variables are observed, the problem has been well studied: the underlying structure can be identified up to the Markov equivalence class, e.g., by the PC  \citep{spirtes2000causation} or GES \citep{chickering2002optimal} algorithm; when the structure is given, the causal coefficient (direct causal effect) between two variables can also be identified \citep{brito2002new,pearl2009causal}.


However, in real-world systems, the variables of interest may only be partially observed. 
Thus, considerable efforts have been dedicated to identification of  causal models in the presence of latent variables.
One line of research focuses on structure learning given partially observed variables.
Notable approaches include FCI
and its variants \citep{spirtes2000causation,Pearl:2000:CMR:331969,colombo2012learning,Akbari2021}, as well as ICA-based~\citep{hoyer2008estimation,salehkaleybar2020learning}, 
 tetrad-based \citep{silva2006learning, Kummerfeld2016}, high-order moments-based \citep{shimizu2009estimation,cai2019triad,xie2020generalized,adams2021identification,chen2022identification}, and rank constraint-based
\citep{silva2006learning,huang2022latent,dong2023versatile} methods.\looseness=-1

In this paper, we focus on the 
the identification of parameters of a partially observed model.
%
Specifically,
given the causal structure of and observational data from a partially observed causal model,
we are interested in identifying all the parameters, and thus the underlying causal model can be fully specified.
To identify the parameters, 
a classical way is to project the directed acyclic graph~(DAG) with latent variables to an acyclic directed mixed graph (ADMG) or partially ancestral graph \citep{richardson2002ancestral}, without explicitly modeling the latent confounders. Based on ADMG, graphical criteria such as half-trek \citep{foygel2012half}, G-criterion \citep{brito2012generalized}, 
and some further developments \citep{tian2009parameter,kumor2020efficient} have been proposed to establish
 the parameter identifiability.
Another way is to leverage do-calculus, proxy variables, and instrumental variables 
\citep{shpitser2006identification,pearl2009causal,huang2012pearl} 
to identify the direct causal effect, which corresponds to the edge coefficient in linear causal models. For a more detailed discussion of related work, please refer to Appendix~\ref{appendix: related work}.

Despite the effectiveness of current methods for parameter identification, however, 
they have two main drawbacks: they require all the variables to be connected in specific ways, 
and only focus on identifying the edge coefficients between observed variables.
To this end, in this paper we propose a novel framework  that considers a more general setting
for parameter identification.
To be specific,  
we allow all variables, including both observed and latent ones, to be flexibly related,
and we aim to recover the edge coefficients among all variables, 
 even including those from a latent variable to another latent variable or another observed variable,
 which previous methods cannot handle. 
We summarize our contributions as follows.
\begin{itemize}[leftmargin=*, itemsep=0pt]
\item 
To the best of our knowledge, we are the first to consider parameter identifiability of partially observed causal model 
in the most general scenario---all variables, including both observed and latent ones, are allowed be flexibly related, and edge coefficients between any pair of variables are concerned.
In contrast, most existing works consider only the edges between observed variables.   
\item Theoretically, we identify three types of parameter indeterminacy in partially observed linear causal models. We then provide graphical conditions that are sufficient for all parameters to be identifiable and show that some of them are provably necessary. These necessary conditions also offer insights into scenarios where the parameters are guaranteed to be non-identifiable.
\item Methodologically, we propose a novel likelihood-based parameter estimation method, which parameterizes population covariance in specific ways to address variance indeterminacy.
Our empirical studies on both synthetic and real-world data validate the effectiveness of our method in the finite-sample regime,
even under certain misspecification of the underlying causal model.
\end{itemize}

\vspace{-0.5em}
\section{Preliminaries}
\vspace{-0.5em}
\label{sec:setting}
\subsection{Problem Setting}
\vspace{-0.5em}
In this work, we focus on partially observed linear causal models, defined as follows.

\begin{definition} [Partially Observed Linear Causal Models]\label{def:linear_models}
  Let $\graph:=(\set{V}_\graph,\set{E}_\graph)$ be a DAG.
  Each variable $V_i \in \mathbf{V}_{\mathcal{G}}$ follows a linear structural equation model
$\node{V}_i=\sum \nolimits_{\node{V}_j \in \parents(\node{V}_i)} f_{j,i} \node{V}_j + \epsilon_{\node{V}_i}$,
where $\set{V}_\graph:=\set{L}_\graph\cup\set{X}_\graph=\{\node{L}_i\}_{i=1}^{m}\cup\{\node{X}_i\}_{i=m+1}^{m+n}$ contains $m$ latent variables and $n$ observed variables. 
$\parents(\node{V}_i)$ denotes the parent set of $\node{V}_i$, $f_{j,i}$ denotes the edge coefficient from $V_j$ to $V_i$, and $\epsilon_{\node{V}_i}$ represents the Gaussian noise term of $\node{V}_i$.
\label{definition:polcm}
\vspace{-0em}
\end{definition}
We drop the subscript $\graph$ in $\set{L}_\graph$ and $\set{X}_\graph$ when the context is clear. We use $\node{V}$, $\set{V}$, and $\setset{V}$ to denote a random variable, a set of variables, 
and a set of sets of variables, respectively.
In Definition~\ref{definition:polcm}, the relations between variables can also be written in the matrix form as $\set{V}_\graph=F^T\set{V}_\graph+\epsilon_{\set{V}_\graph}$, where $F=(f_{j,i})_{i,j\in[m+n]}$ is the weighted adjacency matrix. Here, $f_{j,i}\neq 0$ if and only if $V_j$ is a parent of $V_i$ in $\graph$.
We also write
\vspace{-0.5em}
\[
F = \begin{pNiceMatrix}[first-row,first-col]
    & \set{L} & \set{X}  \\
\set{L} & F_{\set{L}\set{L}}   & F_{\set{L}\set{X}}  \\
\set{X} & F_{\set{X}\set{L}}   & F_{\set{X}\set{X}} 
\end{pNiceMatrix}
\quad\text{and}\quad
\Omega=\begin{pmatrix}
\Omega_{\epsilon_{\set{L}}} & 0 \\
0 & \Omega_{\epsilon_{\set{X}}}
\vspace{-0.5em}
\end{pmatrix},
\]
where $\Omega$ is the diagonal covariance matrix of $\epsilon_{\set{V}_\graph}$.
%

Our objective is to identify $F$, the causal edge coefficients of the model, given observational data and the causal structure $\graph$.
Denote by $\Sigma_{\set{L}}$ and $\Sigma_{\set{X}}$ the population covariance matrix of latent variables $\set{L}$ and observed variables $\set{X}$, respectively; their precise formulations are provided in \cref{proposition:covariance_matrix}. We also denote by $\sigma_{i,j}$ the $(i,j)$-th entry of $\Sigma_{\set{X}}$. In this work, we assume that the noise terms of latent variables, $\epsilon_{\set{L}}$, have unit variance, i.e., $\Omega_{\epsilon_{\set{L}}}=I$, which will be justified later in Section~\ref{sec:def_of_indeterminacy}.
%
Note that variables are partially observed and thus we only have access to i.i.d. samples of observed variables $\set{X}$. 
As variables are jointly Gaussian, the observations can asymptotically be summarized as the population covariance matrix ${\Sigma}_{\set{X}}$. 
 In other words, we aim to identify $F$ and $\Omega$ given ${\Sigma}_{\set{X}}$ and $\graph$. 
The identification of parameters is important in that, once we identify the parameters, the underlying causal model is fully specified, and thus we can flexibly calculate
causal effects, infer interventional distributions,
  and finally answer counterfactual queries \citep{pearl2009causal}.
It is worth noting that, for parameter identification, the structure $\graph$ is assumed to be known, which is different from the setting of causal discovery where the goal is to identify $\graph$ from data.

 \begin{figure}[t]
  \centering 
 \begin{subfigure}[t]{0.3\textwidth}
   \centering
   \includegraphics[width=0.65\textwidth]{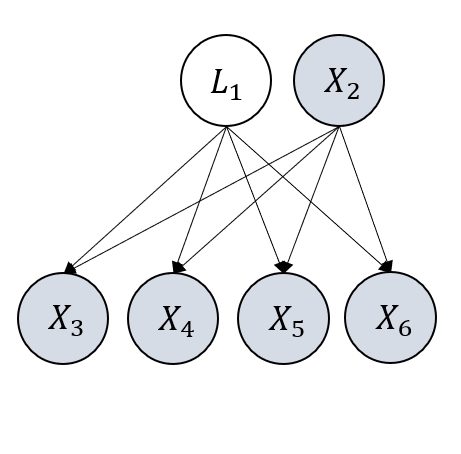}
   \centering
   \vspace{-0.5em}
   \caption{ Graph $\graph_1$. Its parameters can be fully identifiable.}
 \end{subfigure}
 \hspace{0.5em}
 \begin{subfigure}[t]{0.3\textwidth}
   \centering
   \includegraphics[width=0.65\textwidth]{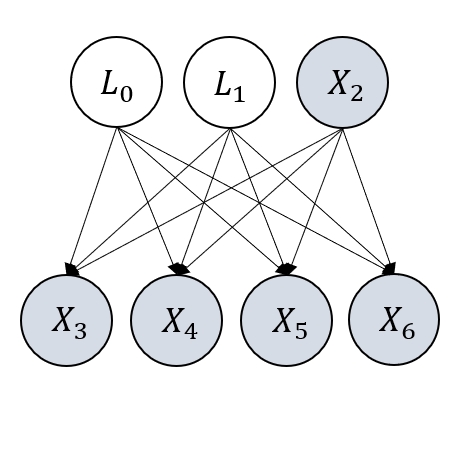}
   \centering
   \vspace{-0.5em}
   \caption{ Graph $\graph_2$. Its parameters cannot be fully identified.}
 \end{subfigure}
 \hspace{0.5em}
  \begin{subfigure}[t]{0.31\textwidth}
   \centering
   \includegraphics[width=0.65\textwidth]{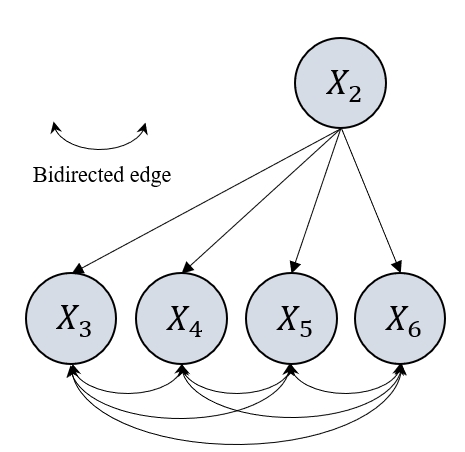}
   \centering
   \vspace{-0.5em}
   \caption{ $\graph_1$ $\graph_2$ have the same ADMG in the latent projection framework.}
 \end{subfigure}
 \vspace{-0em}
  \caption{Illustrations of the advantage of our  framework.
  Within our framework, it can be shown that $\graph_{1}$'s parameters can be identified (up to sign) while 
  $\graph_{2}$'s cannot. In contrast, 
  the latent projection framework cannot even differentiate $\graph_{1}$ from $\graph_{2}$ 
  as they share the same ADMG (c) after projection.
  Furthermore, with ADMG, any edge coefficient that involves a latent variable
  cannot be considered. 
  }  \label{fig:framework}
   \vspace{-1em}
\end{figure}

\vspace{-1.5em}
\subsection{Framework Comparison}
\vspace{-0.5em}
%
Without latent variables, it has been shown all parameters are identifiable \citep{brito2002new}.
However, 
 the problem becomes very challenging
 when latent variables exist.
There are two lines of research.
One focuses on the use of do-calculus, proxy variables, and instrumental variables to identify direct causal effects among observed variables \citep{shpitser2006identification,pearl2009causal,huang2012pearl} (in linear models the direct causal effect is captured by the edge coefficient).
Another line addresses  latent confounders by projecting a DAG with latent variables into an ADMG, where 
the confounding effects of latent
variables are simplified and represented by correlation among noise terms \citep{foygel2012half,brito2012generalized,tian2009parameter,kumor2020efficient}.
An example is in Figure~\ref{fig:framework},
 where (a) is the original graph and (c) is the projected ADMG whose bidirected edges correspond to correlated noise terms.

Compared to the two previous lines of thought, our framework has two advantages.
To begin with, we additionally considers the identifiability of coefficients of edges that involve latent variables.
 For example, in Figure~\ref{fig:framework},
we aim to identify all the coefficients including the one from $\node{L_1}$ to $\node{X_3}$, i.e., $f_{1,3}$. 
In contrast, the proxy variable framework and the latent projection framework identify only 
the coefficients among observed variables: the proxy variable framework focuses only on the direct 
causal effect from one observed variable to another observed variable, while the latent projection 
framework transforms all latent variables into bidirected edges and thus can never identify the coefficient
 of the edge that has a latent variable as the head or tail.

Furthermore, the projection framework deals with latent variables in a rather brute-force way: 
dense latent confounding effects among observed variables may be caused by only a small number of latent variables, 
but that information is lost during projection. For example, in Figure~\ref{fig:framework}, 
(a) and (b) share the same ADMG after projection, i.e., (c).
 However, as we will show later, parameters in (a) can be identified, while in (b) the parameters cannot. If we only consider the ADMG in (c), then we can never capture this nuance
and thus cannot identify the coefficients that we might be able to.


\vspace{-1em}
\section{Identifiability Theory}
\vspace{-0.5em}
\subsection{Definition of Parameter Identifiability and Indeterminacy}
\label{sec:def_of_indeterminacy}
\vspace{-0.5em}
We follow the notion of generic identifiability and define parameter identifiability as follows.
\begin{definition} [Identifiability of Parameters of Partially Observed Linear Causal Models]
Let $\theta=(F,\Omega)\in\Theta$. We say that $\theta$ is generically identifiable, if the mapping $\phi(\theta)=\Sigma_{\set{X}}$
is injective, for almost all $\theta\in\Theta$
with respect to the Lebesgue measure. 
\vspace{-0.5em}
\label{def:identifiability}
\end{definition}

Definition~\ref{def:identifiability} indicates if parameter $\theta$ is identifiable,
 then there does not exist $\theta'\in \Theta$ that entails the same observations as those of $\theta$.
 As in the typical literature of parameter identification, 
 we consider generic identifiability to rule out some rare cases where the parameters for that structure is generally identifiable,
  but with some specific parameterization, 
  the parameters cannot be identified. 
  This is similar to faithfulness in causal discovery \citep{spirtes2000causation} 
  and we will provide an example in Example~\ref{example:generic}.
We next introduce three important indeterminacies about parameter identification when latent variables exist.\looseness=-1

\begin{restatable}[Indeterminacy of Scaling of 
$\Omega_{\epsilon_\set{L}}$]{theorem}{TheoremScalingIndeterminacy}\label{thm:indeterminacy_scaling_variance}
Consider a model that follows Def.~\ref{definition:polcm} 
with number of latent variables $m\geq1$ and $\theta=(F_{\set{L}\set{L}},F_{\set{L}\set{X}},F_{\set{X}\set{L}},F_{\set{X}\set{X}},\Omega_{\epsilon_\set{L}},\Omega_{\epsilon_\set{X}})$.
Let $\Lambda$ be any
invertible diagonal matrix,
and $\tilde{\theta}=(\tilde{F}_{\set{L}\set{L}},\tilde{F}_{\set{L}\set{X}},\tilde{F}_{\set{X}\set{L}},\tilde{F}_{\set{X}\set{X}},\tilde{\Omega}_{\epsilon_\set{L}},\tilde{\Omega}_{\epsilon_\set{X}})$, 
where
    \[\tilde{F}_{\set{L}\set{L}}
    =
    \Lambda^{-1}{F}_{\set{L}\set{L}}\Lambda,~ \tilde{F}_{\set{L}\set{X}}= \Lambda^{-1}{F}_{\set{L}\set{X}},~\tilde{F}_{\set{X}\set{L}}= {F}_{\set{X}\set{L}}\Lambda, ~\tilde{F}_{\set{X}\set{X}}= {F}_{\set{X}\set{X}}, ~
    \tilde{\Omega}_{\epsilon_\set{L}}= \Lambda^2\Omega_{\epsilon_\set{L}}, 
    ~
    \tilde{\Omega}_{\epsilon_\set{X}}= \Omega_{\epsilon_\set{X}}.
    \]
Then,
$\tilde{\theta}$ and $\theta$
 entail the same observations, i.e., 
$\tilde{\Sigma}_{\set{X}}=\Sigma_{\set{X}}$. Furthermore, we have
$\tilde{\Sigma}_{\set{L}}=\Lambda\Sigma_{\set{L}}\Lambda$.
\end{restatable}


 A similar theoretical result is provided in \cite{ankan2023combining}, and yet our setting is much more general and takes that of \cite{ankan2023combining} as a special case:
in our setting, all variables including latent and observed ones can be arbitrarily related while in \cite{ankan2023combining} latent variables cannot be the effect of observed variables. 

\begin{remark} [Implication of Theorem~\ref{thm:indeterminacy_scaling_variance}]
  A key implication of Theorem~\ref{thm:indeterminacy_scaling_variance} is that, 
without further assumption,
 the edge coefficients involving latent variables, i.e.,
 $({F}_{\set{L}\set{L}},{F}_{\set{L}\set{X}},{F}_{\set{X}\set{L}})$, can never be identified,
as there always exists a diagonal matrix $\Lambda$ such that $\tilde{\theta}$ and $\theta$ entail the same observations but 
$(\tilde{F}_{\set{L}\set{L}},\tilde{F}_{\set{L}\set{X}},\tilde{F}_{\set{X}\set{L}})\neq({F}_{\set{L}\set{L}},{F}_{\set{L}\set{X}},{F}_{\set{X}\set{L}})$.
Thus, in the rest of this paper, we assume that the noise terms of latent variables,  $\epsilon_{\set{L}}$, have unit variance, i.e., $\Omega_{\epsilon_\set{L}}=I$.
Under this assumption, we have 
$(\tilde{\Omega}_{\epsilon_\set{L}})_{i,i}=\Lambda_{i,i}^2(\Omega_{\epsilon_\set{L}})_{i,i}=1, i\in [m]$, which implies $\Lambda_{i,i}=\pm 1$. 
 %
 As such, this assumption makes parameter identifiability possible.
 However, 
 even though we fix the scaling of $\Omega_{\epsilon_\set{L}}$,
 there still exists indeterminacy about the sign of parameters, captured by Theorem~\ref{thm:indeterminacy_group_sign}.
\end{remark}


\begin{restatable}[Group Sign Indeterminacy]{theorem}{TheoremGroupSignIndeterminacy}\label{thm:indeterminacy_group_sign}
Consider a model that follows Def.~\ref{definition:polcm} 
with number of latent variables $m\geq1$, $\theta=(F_{\set{L}\set{L}},F_{\set{L}\set{X}},F_{\set{X}\set{L}},F_{\set{X}\set{X}},\Omega_{\epsilon_\set{L}},\Omega_{\epsilon_\set{X}})$, and $\Omega_{\epsilon_\set{L}}=I$.
Let $S$ be a
diagonal sign matrix (entries are either $1$ or $-1$),
and $\tilde{\theta}=(\tilde{F}_{\set{L}\set{L}},\tilde{F}_{\set{L}\set{X}},\tilde{F}_{\set{X}\set{L}},\tilde{F}_{\set{X}\set{X}},\tilde{\Omega}_{\epsilon_\set{L}},\tilde{\Omega}_{\epsilon_\set{X}})$, 
where
  \[\tilde{F}_{\set{L}\set{L}}
    =
    S{F}_{\set{L}\set{L}}S,~ \tilde{F}_{\set{L}\set{X}}= S{F}_{\set{L}\set{X}},~\tilde{F}_{\set{X}\set{L}}= {F}_{\set{X}\set{L}}S, ~\tilde{F}_{\set{X}\set{X}}= {F}_{\set{X}\set{X}}, ~
    \tilde{\Omega}_{\epsilon_\set{L}}=\Omega_{\epsilon_\set{L}}=I, 
    ~
    \tilde{\Omega}_{\epsilon_\set{X}}= \Omega_{\epsilon_\set{X}}.
    \]
Then, $\tilde{\theta}$ and $\theta$ entail the same observations, i.e., 
$\tilde{\Sigma}_{\set{X}}=\Sigma_{\set{X}}$,
and $(\tilde{\Sigma}_{\set{L}})_{ii}=(\Sigma_{\set{L}})_{ii},~\forall i\in [m]$.
\end{restatable}


\begin{figure}[t]
  \centering 
 \begin{subfigure}[t]{0.48\textwidth}
   \centering
   \includegraphics[width=0.65\textwidth]{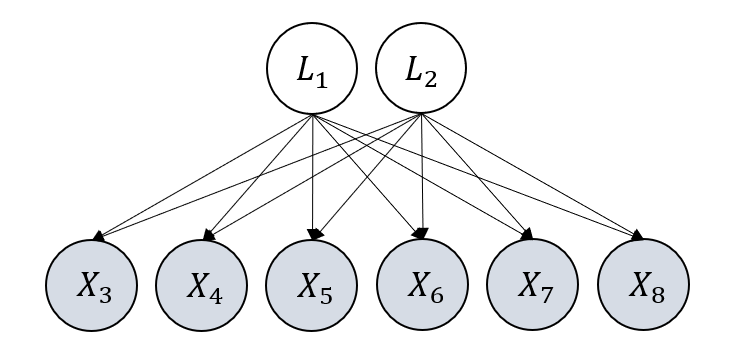}
   \caption{ \small $\graph_1$.
    Its structure is identifiable but its parameters are not identifiable
      even if the structure is given (due to orthogonal indeterminacy).}
 \end{subfigure}
 \vspace{-0mm}
 \hspace{0.3em}
 \centering 
 \begin{subfigure}[t]{0.22\textwidth}
   \centering
   \includegraphics[width=0.65\textwidth]{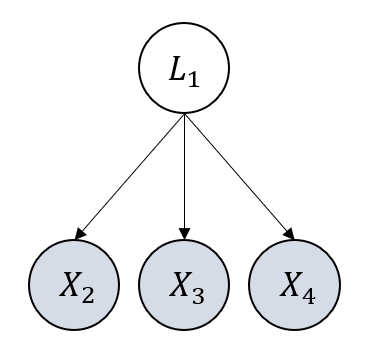}
   \caption{ \small $\graph_2$. Its structure is not identifiable but 
    params are identifiable.}
 \end{subfigure}
 \hspace{0.3em}
 \begin{subfigure}[t]{0.22\textwidth}
   \centering
   \includegraphics[width=0.65\textwidth]{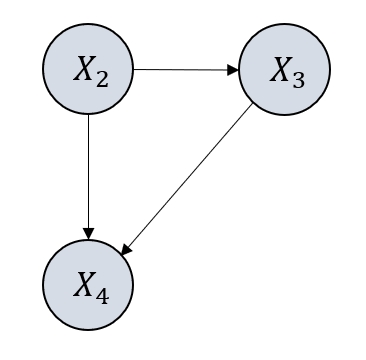}
   \caption{ \small $\graph_3$. $\graph_2$'s structure is not identifiable due to the existence of  $\graph_3$.}
 \end{subfigure}
  \caption{Illustrative examples to show that the graphical condition for structure-identifiability and parameter-identifiability could be very different.}  \label{fig:identification}
  \vspace{-1em}
\end{figure}


\begin{remark} [Remark on Theorem~\ref{thm:indeterminacy_group_sign}]
The indeterminacy described in Theorem~\ref{thm:indeterminacy_group_sign} is referred to as group sign indeterminacy for the following reason: According to the theorem, flipping the sign of $S_{i,i}$ is equivalent to flipping the signs of all coefficients of edges involving the latent variable $\node{L}_i$. This transformation preserves the resulting observations $\Sigma_{\set{X}}$.  In essence, each group consists of coefficients of edges involving a particular latent variable.
\end{remark}

\begin{example} [Example for Group Sign Indeterminacy and Generic Identifiability]
    In Figure~\ref{fig:identification} (b), given the structure and $\Sigma_{\set{X}}$, by assuming $\Omega_{\epsilon_\set{L}}=I$,
    the parameters are generally identifiable up to group sign indeterminacy. 
    Specifically,
    there exist three equality constraints with three free parameters: $f_{1,2}f_{1,3}=\sigma_{2,3}$, $f_{1,2}f_{1,4}=\sigma_{2,4}$, and $f_{1,3}f_{1,4}=\sigma_{3,4}$. 
    The solutions are: (i) $f_{1,2}=\sqrt{\frac{\sigma_{2,3}\sigma_{2,4}}{\sigma_{3,4}}}$, $f_{1,3}=\sigma_{2,3}/f_{1,2}$, $f_{1,4}=\sigma_{2,4}/f_{1,2}$ and (ii) $f_{1,2}=-\sqrt{\frac{\sigma_{2,3}\sigma_{2,4}}{\sigma_{3,4}}}$, $f_{1,3}=-\sigma_{2,3}/f_{1,2}$, $f_{1,4}=-\sigma_{2,4}/f_{1,2}$.
    The two solutions are different only in terms of group sign.
    However, if we set $f_{1,2}=0$, then the parameters are not identifiable (as we will encounter division where the divisor is zero). These rare cases of parameters are of zero Lebesgue measure so we rule out these cases for the definition of identifiability, as in Definition~\ref{def:identifiability}.
    \label{example:generic}
    \vspace{-0.5em}
\end{example}
Intuitively speaking, group sign indeterminacy arises because one may multiply the latent variable $\node{L}_i$ by $-1$ and accordingly flip the signs of all edge coefficients involving $\node{L}_i$. Note that such an indeterminacy is rather minor for the following reason. (i) In practice, we can always anchor the sign of some edges according to our preference or prior knowledge in order to eliminate the group sign indeterminacy. For example, in Figure~\ref{fig:bigfive_params}, if we expect that L2 should be understood as Extraversion instead of non-Exterversion, we can add one additional constraint during our parameter estimation such that the edge coefficient from L2 to E1 ("I am the life of party.") will be positive (as we believe E1 should be positively related to Extraversion). (ii) On the other hand, there are some application scenarios that are not influenced by the group sign indeterminacy, such as causal effect estimations between certain variables.
We note that, as the indeterminacy of group sign is rather minor, in the following
if the parameters are identifiable only  up to group sign indeterminacy, we still say that the parameters are identifiable.

\begin{definition}[Orthogonal Transformation Indeterminacy]
Consider a model that follows Def.~\ref{definition:polcm} 
with number of latent variables $m\geq1$, $\theta=(F_{\set{L}\set{L}},F_{\set{L}\set{X}},F_{\set{X}\set{L}},F_{\set{X}\set{X}},\Omega_{\epsilon_\set{L}},\Omega_{\epsilon_\set{X}})$, and $\Omega_{\epsilon_\set{L}}=I$.
    We say that there exists an orthogonal transformation indeterminacy in the identification of parameters if there exists a non-diagonal orthogonal matrix $Q$ such that $(F_{\set{L}\set{L}},F_{\set{L}\set{X}},F_{\set{X}\set{L}},F_{\set{X}\set{X}},\Omega_{\epsilon_\set{L}},\Omega_{\epsilon_\set{X}})$
    and $(\tilde{F}_{\set{L}\set{L}},\tilde{F}_{\set{L}\set{X}},\tilde{F}_{\set{X}\set{L}},\tilde{F}_{\set{X}\set{X}},\tilde{\Omega}_{\epsilon_\set{L}},\tilde{\Omega}_{\epsilon_\set{X}})$ share the same support and entail the same observations, where
     \[\tilde{F}_{\set{L}\set{L}}
    =
    Q^T{F}_{\set{L}\set{L}}Q,~ \tilde{F}_{\set{L}\set{X}}= Q^T{F}_{\set{L}\set{X}},~\tilde{F}_{\set{X}\set{L}}= {F}_{\set{X}\set{L}}Q, ~\tilde{F}_{\set{X}\set{X}}= {F}_{\set{X}\set{X}}, ~
    \tilde{\Omega}_{\epsilon_\set{L}}=\Omega_{\epsilon_\set{L}}=I, 
    ~
    \tilde{\Omega}_{\epsilon_\set{X}}= \Omega_{\epsilon_\set{X}}.
    \]
\end{definition}
\vspace{-0.5em}

The orthogonal transformation indeterminacy is the major indeterminacy we consider in the presence of latent variables. Such an indeterminacy also arises in factor analysis \citep{shapiro1985identifiability,bekker1997generic}, which can be viewed as a special case of the data generating procedure considered in \cref{def:linear_models}. Here we only give the definition 
and will later provide Theorem~\ref{thm:orthogonal_indeterminacy} with an example 
that captures the scenarios where such indeterminacy exists.

It is worth noting that the graphical condition for structure identifiability  and parameter identifiability   
could be very different. 
For example, $\graph_1$ in Figure~\ref{fig:identification} (a) is structure-identifiable, and yet the parameters are not identifiable even if the structure is given.
In contrast $\graph_2$ in 
Figure~\ref{fig:identification} (b) is not structure-identifiable, as there exists another structure $\graph_3$ in Figure~\ref{fig:identification} (c) such that $\graph_2$ and $\graph_3$ can never be differentiated from observational distribution; and yet if $\graph_2$ is given, its parameters are identifiable
 (as in Example~\ref{example:generic}). 
Therefore, in this paper, we first consider the cases where the structure can be identified and then study which further conditions are needed for the identifiability of parameters.
This will give rise to conditions under which the whole causal model can be fully specified.

\vspace{-1em}
\subsection{Graphical Condition for Structure Identifiability}
\vspace{-0.5em}

To explore the conditions for the whole causal model to be specified, we start with the structure identifiability of partially observed linear causal models. 
Recent advances have shown that if certain graphical conditions are satisfied \citep{huang2022latent,dong2023versatile},
even though all variables including latent ones are allowed to be very flexibly related,
the causal structure can still be identified.
Next, we focus on the conditions by \cite{dong2023versatile}, which takes that of \cite{huang2022latent} as special cases.
Roughly speaking, the identifiability of the structure of a partially observed linear causal model  is built upon the identifiability of atomic covers, defined as follows (with \textit{effective cardinality} defined
as $||\setset{V}||=|(\cup_{\set{V} \in \setset{V}} \set{V})|$ and $\purechildren$ defined in Appendix~\ref{appendix: purechildren}). 

\begin{definition} [Atomic Cover \citep{dong2023versatile}]
  Let $\set{V}\in\set{V}_{\graph}$ be a set of variables,  where $l$ out of $|\set{V}|$ are latent,
  and the remaining $|\set{V}|-l$  are observed. 
  $\set{V}$ is an atomic cover if $\set{V}$ is a single observed variable, or 
    if the following conditions hold:
    \vspace{-0.5em}
  \begin{itemize}[leftmargin=2em, itemsep=-3pt]
      \item [(i)] There exists 
      a set of atomic covers $\setset{C}$, with $||\setset{C}||\geq l+1$, such that
      $\cup_{\set{C} \in \setset{C}} \set{C}\subseteq \purechildren(\set{V})$.

     \item [(ii)] There exists a  set  of covers  $\setset{N}$ with $||\setset{N}||\geq l+1$, s.t. 
     $(\cup_{\set{N} \in \setset{N}} \set{N}) \cap (\cup_{\set{C} \in \setset{C}} \set{C})=\emptyset$, every element in $\cup_{\set{N} \in \setset{N}} \set{N}$ is a neighbour of 
     every element in 
     $\set{V}$, 
     and 
     $\set{V}$ d-separates
     $\setset{N}$ and $\setset{C}$.

\item [(iii)] There does not exist a partition $\setset{P}$ of $\set{V}$,
s.t., all elements in $\setset{P}$ are atomic covers.
  \end{itemize}
  \vspace{-0.5em}
\label{definition:ac}
\end{definition}

  \begin{wrapfigure}{r}{0.54\textwidth}
  \begin{center}
  \vspace{-1.5em}
    \includegraphics[width=0.49\textwidth]{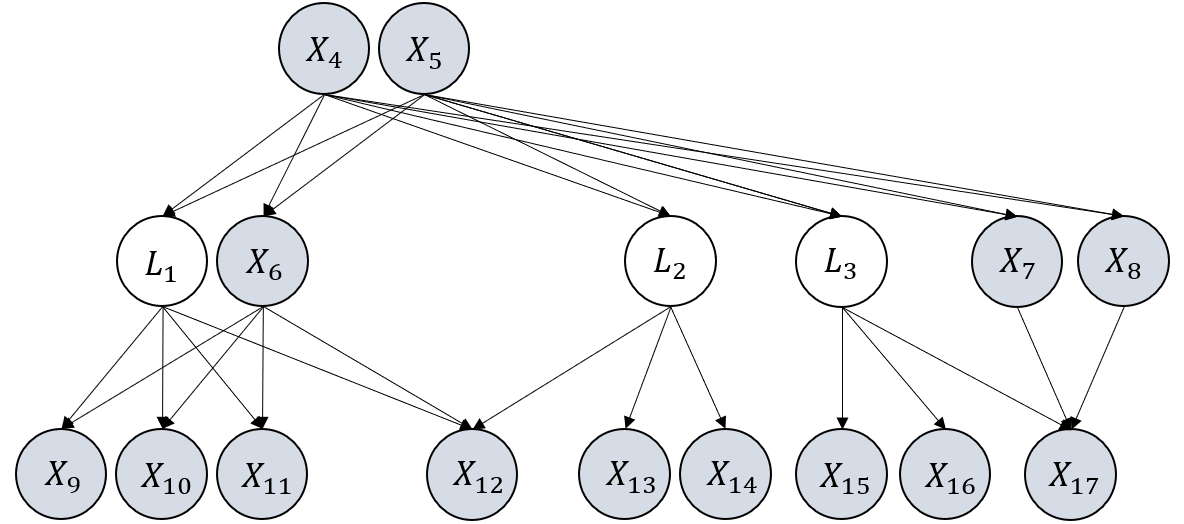}
  \end{center}
  \caption{\small An illustrative graph that satisfies the conditions for structure-identifiability. 
  At the same time, it also satisfies the condition for parameter identifiability - 
  given the structure and $\Sigma_{\set{X}}$, 
  all the parameters are identifiable only up to group sign indeterminacy.}
  \label{fig:example1}
  \vspace{-1em}
\end{wrapfigure}
 The intuition that we build structure identifiability upon the notion of atomic covers is as follows.
  When a set of latent variables share the same set of children and neighbors,
    it is impossible to differentiate these latent variables from each other, 
    and thus we need to consider them together as the minimal identifiable group to build up the identifiability of 
    the whole structure. 
    Such a minimal identifiable group of variables is defined as an atomic cover.
    Roughly, for a group of variables to be qualified as an atomic cover, 
     it has to have enough children and neighbors.
   An example is as follows.


\begin{example} [Example of Atomic Cover]
  Consider the graph in Fig.~\ref{fig:example1}.  
  $\set{V}=\{\node{L_1},\node{X_6}\}$ is an atomic cover. 
  This is because there exist $\setset{C}=\{\{\node{X_9}\},\{\node{X_{10}}\}\}$ with 
  $||\setset{C}||\geq l+1=2$ such that (i) in Def.~\ref{definition:ac} is satisfied.
  And there exist $\setset{N}=\{\{\node{X_{11}}\},\{\node{X_{12}}\}\}$ 
  (or, $\setset{N}=\{\{\node{X_{4}}\},\{\node{X_{5}}\}\}$)
   with $||\setset{N}||\geq l+1=2$ such that (ii) in Def.~\ref{definition:ac} is satisfied. 
   We can also find that (iii) in Def.~\ref{definition:ac} is satisfied.
  Thus $\{\node{L_1},\node{X_6}\}$ is an atomic cover. Another example would be in Figure~\ref{fig:example2}, where $\{\node{L_1},\node{L_2}\}$ is an atomic cover.
  \vspace{-0em}
\end{example}

  \begin{condition}[Basic Conditions for Structure Identifiability \citep{dong2023versatile}] $\graph$ satisfies the basic graphical condition for identifiability, if 
     every latent variable belongs to at least one atomic cover in $\graph$ 
     and for each atomic cover with latent variables, 
     any of its children is not adjacent to any of its neighbours.
    \label{cond:basic}
  \end{condition}

\begin{condition}[Condition  on  Colliders \citep{dong2023versatile}] 
  In $\mathcal{G}$, 
  if (i) there exists sets of variables $\mathbf{V}$, $\mathbf{V_1}$, $\mathbf{V_2}$,
   and $\mathbf{T}$ such that every variable in $\mathbf{V}$ is a collider of two atomic covers 
   $\mathbf{V_1}$, $\mathbf{V_2}$, 
   and $\mathbf{T}$ is a minimal set of variables that d-separates $\mathbf{V_1}$ from $\mathbf{V_2}$, 
   and (ii) there exists at least one latent variable in $\mathbf{V}\cup \mathbf{V_1}\cup\mathbf{V_2}\cup\mathbf{T}$, 
   then we must have $|\mathbf{V}| + |\mathbf{T}| \geq |\mathbf{V_1}|+|\mathbf{V_2}|$.
\label{cond:vstructure}
\end{condition}

\begin{example} [Example that satisfies
Conditions~\ref{cond:basic} and \ref{cond:vstructure}]
    Consider  Figure~\ref{fig:example1}.
    All latent variables in the graph belong to at least one atomic cover and thus Condition~\ref{cond:basic}
 is satisfied.
 Plus, Condition~\ref{cond:vstructure} is also satisfied.
 This is because the sets of variables $\mathbf{V}$, $\mathbf{V_1}$, $\mathbf{V_2}$,
 and $\mathbf{T}$ that satisfy (i) and (ii) in Condition~\ref{cond:vstructure} are 
     $\set{V}=\{\node{X_{12}}\}$,
    $\set{V_1}=\{\node{L_{1}},\node{X_6}\}$,
    $\set{V_2}=\{\node{L_{2}}\}$, and
    $\set{T}=\{\node{X_{4}},\node{X_5}\}$,
    and we also have $|\set{V}| + |\set{T}| \geq |\set{V_1}|+|\set{V_2}|$. Therefore,
    the graph in Figure~\ref{fig:example1} satisfies both Conditions~\ref{cond:basic}
    and \ref{cond:vstructure}.
    \vspace{-0.5em}
\end{example}

The identifiability theory of  structure is as follows.
For a graph $\graph$, if Condition~\ref{cond:basic}
 and Condition~\ref{cond:vstructure} are satisfied, then asymptotically
 the structure is identifiable up to the Markov equivalence class (MEC) of $\mathcal{O}_{\text{a}}(\mathcal{O}_s(\graph))$ (definitions of $\mathcal{O}_{\text{a}}(\cdot)$ and $\mathcal{O}_s(\cdot)$ can be found in Appendix~\ref{appendix: graph operator}).
 Roughly speaking, the underlying causal structure of $\graph$ can be identified except that the directions of some edges cannot be determined.
 Next, we will show that, given any  DAG in the identified equivalence class together with $\Sigma_{\set{X}}$, the parameters of the model are also identifiable, if certain conditions are satisfied.

\vspace{-0.5em}
\subsection{Identifiability of Parameters}
\vspace{-0.5em}
In this section we show that, given graphical Conditions~\ref{cond:basic}
 and~\ref{cond:vstructure},
 the causal coefficients $F$ in Definition~\ref{definition:polcm} are also identifiable,
 if certain conditions are satisfied.

\begin{restatable}[Sufficient Condition for Parameter Identifiability (up to group sign), Based on Structure Identifiability]{theorem}{TheoremSufficientConditionIdentifiability}\label{thm:identifiability}
 Assume that $\graph$  satisfies Conditions~\ref{cond:basic} and~\ref{cond:vstructure} and thus the structure can be identified up to the
   MEC of $\mathcal{O}_{\text{a}}(\mathcal{O}_s(\graph))$.
For any DAG in the equivalence class, the parameters are identifiable,
if both the following hold:
\vspace{-0.5em}
  \begin{itemize}[leftmargin=2em, itemsep=-3pt]
  \item [(i)] For any atomic cover $\set{V}=\set{X}\cup\set{L}$,  $|\set{L}|\leq1$.
  \item [(ii)] If an atomic cover $\set{V}=\set{X}\cup\set{L}$ satisfies $|\set{L}|\neq 0$ and $|\set{X}|\geq 1$, 
  then 
  all simple treks (Def.~\ref{def:trek}) between $\set{L}$ and $\set{X}$ do not contain any latent variables that are not in $\set{L}$.
  \end{itemize}
\vspace{-1em}
\end{restatable}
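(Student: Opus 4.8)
The plan is to fix any DAG $\graph$ in the identified equivalence class (available by the structure-identifiability result under Conditions~\ref{cond:basic} and~\ref{cond:vstructure}) and to show that the map $(F,\Omega)\mapsto\Sigma_{\set{X}_\graph}$ is generically injective up to group sign on the set of parameters respecting the support of $\graph$ with unit latent variances. Since Lemma~\ref{lemma:scaling} is neutralized by the unit-variance convention and group sign is permitted by Definition, the only remaining obstruction is the orthogonal-transformation indeterminacy. Thus the real content is to extract, from $\Sigma_{\set{X}_\graph}$, enough covariance constraints to pin $F$ down beyond a global sign. I would organize this as an induction along a topological order of the atomic covers, maintaining the invariant that, after processing covers up to a given position, all edge coefficients into and among the processed variables and all corresponding noise variances are determined up to group sign.

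For the inductive step at a cover $\set{V}=\set{X}\cup\set{L}$, condition~(i) guarantees $|\set{L}|\le 1$, leaving two cases. When $|\set{L}|=0$, the cover is a single observed variable and its incoming coefficients together with its residual variance are recovered from linear covariance equations against already-identified ancestors, with generic non-degeneracy giving a unique solution. When $|\set{L}|=1$, the cover carries a single unit-variance latent factor $L$; here I would use the atomic-cover guarantee of at least $l+1=2$ pure children and at least $2$ disjoint neighbours to form rank-one (vanishing-tetrad style) constraints among observed descendants of the pure children. These determine the loadings from $L$ to its children up to one global sign, exactly as in Example~\ref{example:generic}, and the incoming coefficients to $L$ then follow from cross-covariances with identified ancestors together with the $\mathrm{Var}(L)=1$ constraint. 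In the mixed case $|\set{X}|\ge1$, condition~(ii) provides an observed set $\hat{\set{X}}$ d-separating $\set{X}$ from $\set{L}$; conditioning on $\hat{\set{X}}$ decouples the latent loadings from the coefficients attached to the co-located observed variables, so that the two groups can be solved for separately rather than being confounded.

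The role of the two conditions should then be made explicit. Condition~(i) is precisely what collapses the orthogonal-transformation freedom down to group sign: a cover containing $k\ge 2$ latents would expose an $O(k)$ rotation $Q$ acting on the associated loading block that preserves both the product $\Lambda\Lambda^{\top}$ read off from the covariances and the support of $F$, i.e.\ a genuine orthogonal-transformation indeterminacy, whereas restricting to $|\set{L}|\le1$ forces $Q$ to be a $1\times1$ orthogonal matrix, namely $\pm1$. Condition~(ii) is what renders mixed covers identifiable, by giving the separation needed to disentangle latent from observed contributions within the same cover. Chaining these local identifications along the topological order yields the full matrix $F$ up to group sign.

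I expect the main obstacle to be the local identification at a mixed atomic cover and the proof that the resulting polynomial system admits only sign-related solutions for \emph{generic} parameters, i.e.\ establishing isolated solutions via a non-vanishing Jacobian rather than merely consistent equations. A secondary difficulty is carefully propagating the argument when the pure children or neighbours invoked in the rank constraints are themselves covers containing latents, so that every covariance used is always expressible through quantities already identified in earlier inductive stages or through observed descendants reachable by the d-separation of condition~(ii).
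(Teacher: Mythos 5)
Your overall strategy---induction over atomic covers, tetrad-style constraints to recover a latent's loadings up to sign, the d-separating set of condition~(ii) to decouple latent loadings from the coefficients of co-located observed variables, and the unit-variance constraint on the latent to eliminate the last free parameter---matches the paper's, and your reading of the roles of conditions (i) and (ii) is accurate. The genuine gap lies in the direction of your induction and in what you defer as a ``secondary difficulty.'' You process covers in topological order (ancestors first) and maintain identifiability of coefficients \emph{into} processed variables; but the local step at a cover $\set{V}$ containing a latent $\node{L}$ needs covariances involving the pure children of $\set{V}$, and when those pure children are themselves latent (as in any latent hierarchy, e.g.\ Figure~\ref{fig:for_identifiability_1}), every observable surrogate covariance factors through the still-unidentified coefficients on the paths from those children down to observed variables. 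Concretely, if $\node{C_1},\node{C_2}$ are latent pure children with observed descendants $\node{D_1},\node{D_2}$, then $\sigma_{\node{D_1},\node{D_2}}$ only reveals products of the form $f_{\node{L}\rightarrow\node{C_1}}f_{\node{L}\rightarrow\node{C_2}}\pi_1\pi_2$ with $\pi_1,\pi_2$ unknown downstream path sums, and ratios of such quantities cancel the loadings rather than isolate them. So your invariant cannot be established at such covers; this is not a side issue to be patched later, it is the crux of the proof.

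The paper resolves exactly this by running the induction in the opposite direction: covers are indexed by \emph{inverse} causal ordering (leaves first), the invariant is that all coefficients \emph{from} each processed cover \emph{to its children} are identified, and a dedicated lemma (Lemma~\ref{lemma:for_thm_identifiability}) shows that once the coefficients from a cover to one observed (or effectively observed) pure child $\node{C}$ are known, the covariance $\sigma_{\node{L},\node{A}}$ of its latent with any non-descendant $\node{A}$ of $\node{C}$ is computable from $\Sigma_{\set{X}_\graph}$, i.e.\ the latent can thereafter be treated as observed. With this device, the base case (a leaf-most cover, whose children are necessarily all observed) and the inductive step both reduce to four concrete local cases (singleton observed cover; singleton latent cover; mixed cover with at least three pure children; mixed cover with exactly two pure children), each solved by the explicit equation systems you sketch, with the latent noise variance $t$ carried as a free parameter and finally pinned down by $t+\sum_{j}f_{\node{\hat{X}_j}\rightarrow\node{L}}^2(t)=1$. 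If you reverse your ordering and prove the analogue of Lemma~\ref{lemma:for_thm_identifiability}, your plan becomes the paper's proof; as written, the induction does not close.
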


Theorem~\ref{thm:identifiability} provides a sufficient condition such that the parameters are identifiable.
%
Now, for a better understanding of Theorem~\ref{thm:identifiability}, 
we provide an example of it as follows.

\begin{example} [Example for Theorem~\ref{thm:identifiability}]
The graph $\graph$ in  Figure~\ref{fig:example1}
satisfies the conditions for parameter identifiability in Theorem~\ref{thm:identifiability}. Specifically, 
condition (i) in Theorem~\ref{thm:identifiability}, is satisfied as all atomic covers contain no more than one latent variable. 
Plus,  condition (ii) in Theorem~\ref{thm:identifiability} is also satisfied,
as the atomic cover $\set{V}=\set{X}\cup\set{L}=\{\node{L_1}\}\cup\{\node{X_6}\}$ satisfies   $|\set{L}|\neq 0$ and $|\set{X}|\geq 1$ 
 and 
 all simple treks  between $\{\node{L_1}\}$ and $\{\node{X_6}\}$ contain only observed variables except $\{\node{L_1}\}$.
Therefore, the parameters are identifiable for the graph in Figure~\ref{fig:example1}.
\vspace{-0.5em}
\end{example}

Next, we discuss under which conditions the parameters are guaranteed to be not identifiable.
As discussed in Section~\ref{sec:def_of_indeterminacy},
there are three kinds of indeterminacy. The first one can be solved by assuming
unit variance of the noise terms of latent variables while the second one group sign indeterminacy is rather trivial such that 
we still consider parameters as identifiable even if group sign indeterminacy exists.
Therefore, we will focus on the third one, orthogonal transformation indeterminacy, in what follows.
%

\begin{restatable}[Condition for the Existence of Orthogonal Transformation Indeterminacy]{theorem}{TheoremOrthogonalIndeterminacy}\label{thm:orthogonal_indeterminacy}
Consider the model in Definition~\ref{definition:polcm}. 
If a set of latent variables $\set{L}$ with $|\set{L}|\geq 2$, 
have the same parents and children,
then there must exist orthogonal transformation indeterminacy regarding the edge coefficients $F$. In other words, $F$ can at most be identified up to orthogonal transformation indeterminacy.
\vspace{-0em}
\end{restatable}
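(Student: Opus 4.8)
The plan is to exhibit an explicit family of orthogonal transformations acting only on $\set{L}$ that leaves both the entailed observed covariance $\Sigma_{\set{X}_\graph}$ and the diagonal noise covariance $\Omega$ unchanged while preserving the support of $F$, which is precisely the content of orthogonal transformation indeterminacy.

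First I would record the block structure forced by the hypothesis. Since every $L\in\set{L}$ has the same parent set $\set{P}:=\parents(L)$ and the same child set $\set{C}:=\children(L)$, and since acyclicity forbids any edge $L_i\to L_j$ within $\set{L}$ (such an edge would place $L_i\in\parents(L_j)\setminus\parents(L_i)$, contradicting equal parent sets), there are no edges inside $\set{L}$. After permuting variables so $\set{L}$ comes first, write $\set{V}_\graph=(\set{L};\set{O})$ with $\set{O}$ the remaining variables, and partition $F$ into blocks $F_{LL}=0$, $F_{LO}$ (outgoing edges, nonzero only in the $\set{C}$-columns), $F_{OL}$ (incoming edges, nonzero only in the $\set{P}$-rows), and $F_{OO}$. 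The key structural fact is that equal children makes every $\set{C}$-column of $F_{LO}$ entrywise nonzero and equal parents makes every $\set{P}$-row of $F_{OL}$ entrywise nonzero.

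Next I would set up the change of variables. Fix a non-diagonal orthogonal $Q_0\in O(|\set{L}|)$, which exists since $|\set{L}|\geq2$ (e.g.\ a rotation by a generic angle), and let $Q=\mathrm{diag}(Q_0,I)$; then $Q$ is orthogonal and non-diagonal and fixes $\set{O}$, in particular $\set{X}_\graph$, pointwise. Define $\tilde{\set{V}}_\graph=Q\set{V}_\graph$. Using $\set{V}_\graph=(I-F^T)^{-1}\epsilon_{\set{V}_\graph}$ and $\tilde\epsilon:=Q\epsilon_{\set{V}_\graph}$, a short computation gives $\tilde{\set{V}}_\graph=(I-(QFQ^T)^T)^{-1}\tilde\epsilon$, so the transformed model has coefficient matrix $F'=QFQ^T$ (the orthogonal transformation of $F$; this collapses to the definition's $QF$ exactly when $\set{L}$ is exogenous, i.e.\ $F_{OL}=0$) and noise $\tilde\epsilon$. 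Because the latent noises have unit variance, which we must assume by Lemma~\ref{lemma:scaling}, the $\set{L}$-block of $\mathrm{Cov}(\tilde\epsilon)$ is $Q_0 I Q_0^T=I$ and all other blocks are untouched, so $\mathrm{Cov}(\tilde\epsilon)=\Omega$ stays diagonal; and since $Q$ fixes $\set{X}_\graph$, we have $\tilde{\set{X}}_\graph=\set{X}_\graph$ and the entailed $\Sigma_{\set{X}_\graph}$ is literally identical.

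Finally I would verify support preservation, which is the main obstacle and the only place the full strength of the hypothesis is used. Blockwise, $F'_{LL}=0$, $F'_{OO}=F_{OO}$, $F'_{LO}=Q_0F_{LO}$, and $F'_{OL}=F_{OL}Q_0^T$. The zero columns and rows of $F_{LO}$ and $F_{OL}$ (those outside $\set{C}$ and $\set{P}$) map to zero, so no new edges appear, while the dense $\set{C}$-columns and $\set{P}$-rows map to vectors whose entries are nonzero linear combinations, hence nonzero for all but a measure-zero set of $(F,Q_0)$. Thus for generic parameters there is a non-diagonal orthogonal $Q$ with $F'=QFQ^T$ sharing the support of $F$, entailing the same observation, and satisfying $F'\neq F$; letting the rotation vary traces out a whole orbit of indistinguishable parameters, so $F$ can be identified at most up to orthogonal transformation indeterminacy. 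I would emphasize that equal children (resp.\ parents) is exactly what keeps the rotated columns (resp.\ rows) inside the original support without spawning edges to variables outside $\set{C}$ (resp.\ from variables outside $\set{P}$): weakening either hypothesis would generically create new edges and destroy support preservation, which is why both are needed.
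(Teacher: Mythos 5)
Your proof is correct, and its skeleton matches the paper's: you build the same block-orthogonal matrix (a non-diagonal $Q_0$ acting on the shared latent set $\set{L}$, identity elsewhere), and your support-preservation step is the same blockwise argument the paper makes (columns/rows outside the common children $\set{C}$ and parents $\set{P}$ are zero and stay zero; the dense blocks stay generically dense). Where you genuinely differ is in how invariance of the entailed observation is verified. The paper proves a dedicated lemma (\cref{lemma:orthogonal_transformation}) by substituting $\hat{A}=Q^TAQ$, $\hat{B}=Q^TB$, $\hat{C}=CQ$, $\hat{D}=D$ into the closed-form block expressions of \cref{proposition:covariance_matrix} and checking $\Sigma_{\set{X}_\graph}$ is unchanged by matrix algebra. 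You instead rotate the latent variables themselves: $\tilde{\set{V}}=Q\set{V}$ obeys the SEM with coefficients $QFQ^T$ and noise $Q\epsilon$, the noise covariance stays equal to $\Omega$ because the latent-noise block is the identity, and the observed coordinates are fixed pointwise, so $\Sigma_{\set{X}_\graph}$ is literally identical. This is more elementary and self-contained --- it bypasses the closed-form parameterization (which carries implicit invertibility requirements, e.g.\ the $C^{-1}$ inside $N$) --- while the paper's route keeps the indeterminacy visible in the same covariance parameterization it later exploits for estimation. You are also more explicit on two points the paper glosses over: the genericity needed so that no entry of the rotated dense blocks vanishes, and the fact that the exhibited indeterminacy is conjugation $F\mapsto QFQ^T$ rather than the left multiplication $QF$ in the paper's definition (they coincide only when the rotated latents have no parents); the paper's own proof has the same unremarked mismatch. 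One shared caveat, not a new gap: like the paper's lemma, your argument needs the noise variances on $\set{L}$ to be equal (normalized to one), whereas the convention announced after \cref{lemma:scaling} normalizes the variances of the latent variables themselves; the same scaling freedom justifies either normalization, so your appeal to \cref{lemma:scaling} carries exactly the same imprecision as the paper's.
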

\begin{example}[Example for Thm.~\ref{thm:orthogonal_indeterminacy}]
 Consider Fig.~\ref{fig:example2}.
 The graph satisfies the conditions in Thm.~\ref{thm:orthogonal_indeterminacy}
 as the parents and children of $\node{L_1}$ and $\node{L_2}$  are exactly the same. Therefore, there must exist orthogonal transformation indeterminacy for the edge coefficients $F$ and thus the parameters are not identifiable.
\vspace{-1.5em}
\end{example}

The Theorem~\ref{thm:orthogonal_indeterminacy} above indicates that,
   if there exist two latent variables that share the same parents and children, 
   then the edge parameters 
   can at most be identified up to orthogonal transformation. 
  This directly implies a necessary condition for parameter identifiability as follows.

\begin{restatable}[General Necessary Condition for Parameter Identifiability]{corollary}{CorollaryOrthogonalIndeterminacyBasic}\label{cor:orthogonal_indeterminacy_basic}
For parameters to be identifiable, every pair of latent variables has to have at least one different parent or child.
\vspace{-0.5em}
\end{restatable}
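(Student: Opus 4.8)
The plan is to prove the contrapositive and thereby reduce the statement to a direct specialization of Theorem~\ref{thm:orthogonal_indeterminacy}. The corollary claims that identifiability of $F$ forces every pair of latent variables to differ in at least one parent or child; contrapositively, it suffices to show that whenever \emph{some} pair of latent variables shares exactly the same parents and the same children, the coefficients $F$ fail to be identifiable. This negated hypothesis is precisely the hypothesis of Theorem~\ref{thm:orthogonal_indeterminacy} restricted to a two-element set, so almost all of the technical content is inherited from that theorem.

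Concretely, I would argue as follows. Suppose two latent variables $\node{L}_i,\node{L}_j\in\set{L}_\graph$ have the same parents and the same children, and set $\set{L}=\{\node{L}_i,\node{L}_j\}$, so that $|\set{L}|=2\geq 2$ and $\set{L}$ meets the hypothesis of Theorem~\ref{thm:orthogonal_indeterminacy}. Applying that theorem produces a non-diagonal orthogonal matrix $Q$ for which $(QF,\Omega)$ entails the same observed covariance $\Sigma_{\set{X}_\graph}$ as $(F,\Omega)$ while $QF$ retains the support of $F$, so that $(QF,\Omega)$ is a second admissible parameter in $\Theta$. Since $\phi(QF,\Omega)=\phi(F,\Omega)=\Sigma_{\set{X}_\graph}$ but $(QF,\Omega)\neq(F,\Omega)$, the map $\phi$ is not injective at $(F,\Omega)$; because the construction of $Q$ applies to generic parameter values, injectivity fails on a set of positive Lebesgue measure, contradicting Definition~\ref{def:identifiability}. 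Hence no such pair can exist, which is the assertion of the corollary.

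The main obstacle will be to make sure that the orthogonal indeterminacy supplied by Theorem~\ref{thm:orthogonal_indeterminacy} is genuinely non-trivial, rather than an instance of the group sign indeterminacy that we have agreed to disregard. This is secured by the non-diagonality of $Q$: a group sign transformation corresponds to a diagonal $\pm1$ matrix, whereas Theorem~\ref{thm:orthogonal_indeterminacy} yields a $Q$ that genuinely mixes the two latent coordinates, so $QF$ is a distinct coefficient matrix not obtainable by sign flips alone. A second point to verify is that the indeterminacy is not removed by the unit-variance normalization mandated by Lemma~\ref{lemma:scaling}: an orthogonal rotation of a standard, independent latent Gaussian vector is again standard, so the rotated model still respects the normalization and the indeterminacy persists. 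Together these checks confirm that the non-diagonal $Q$ truly defeats identifiability, completing the reduction.
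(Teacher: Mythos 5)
Your proposal is correct and follows essentially the same route as the paper: a contrapositive/contradiction argument that reduces the corollary to a direct application of Theorem~\ref{thm:orthogonal_indeterminacy} to the pair of latent variables sharing parents and children. Your additional checks (non-diagonality of $Q$ versus mere sign flips, and compatibility with the unit-variance normalization) are sound elaborations of points the paper leaves implicit.
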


\cref{cor:orthogonal_indeterminacy_basic}
captures
a necessary condition in
the general cases 
such that parameters are identifiable.
If we further consider the graphs that are  also structure identifiable (as we need to identify the structure first to fully specified the causal model), 
 we further have the following \cref{cor:orthogonal_indeterminacy} by considering the notion of atomic covers (the proofs of both corollaries can be found in the Appendix).
 
\begin{restatable}[Necessary Condition about Atomic Covers for Parameter Identifiability]{corollary}{CorollaryOrthogonalIndeterminacy}\label{cor:orthogonal_indeterminacy}
Assume $\graph$  satisfies Conditions~\ref{cond:basic} and~\ref{cond:vstructure} and thus the structure can be identified up to the 
MEC of $\mathcal{O}_{\text{a}}(\mathcal{O}_s(\graph))$.
For any DAG $\graph$ in the equivalence class,
for $\graph$'s parameters to be identifiable, 
every atomic cover must 
  contain no more than one latent variable.
\label{cor:orthogonal_indeterminacy}
  \vspace{-0.0em}
\end{restatable}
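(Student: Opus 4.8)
The plan is to prove the contrapositive: assuming that some atomic cover of the given DAG $\graph$ (a representative of the MEC of $\mathcal{O}_{\text{min}}(\mathcal{O}_s(\graph))$) contains two or more latent variables, I will exhibit an orthogonal transformation indeterminacy and conclude that the parameters are not identifiable. Concretely, suppose toward a contradiction that there is an atomic cover $\set{V}=\set{X}\cup\set{L}$ with $|\set{L}|\geq 2$. The heart of the argument is a \emph{structural claim}: the latent variables grouped inside a single atomic cover share identical parent sets and identical children sets in $\graph$, and in particular there are no determined directed edges among them. Intuitively this is exactly why they are bundled into one cover rather than being separable: an atomic cover is the minimal indivisible unit of latent structure, since condition (iii) of Definition~\ref{definition:ac} forbids partitioning it into smaller atomic covers, and under Conditions~\ref{cond:basic} and~\ref{cond:vstructure} the witnessing pure children $\setset{C}$ and neighbours $\setset{N}$ attach to the cover \emph{as a whole}, so no individual latent member can be singled out by a distinct parent or child.

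With this structural claim in hand, I would select any two latent variables $\node{L}_a,\node{L}_b\in\set{L}$; by the claim the pair $\set{L}'=\{\node{L}_a,\node{L}_b\}$ has common parents and common children, so $\set{L}'$ satisfies the hypothesis $|\set{L}'|\geq 2$ of Theorem~\ref{thm:orthogonal_indeterminacy}. Applying that theorem produces a non-diagonal orthogonal matrix $Q$ (acting as the identity outside the block indexed by $\node{L}_a,\node{L}_b$) such that $(QF,\Omega)$ entails the same $\Sigma_{\set{X}_\graph}$ while preserving the support of $F$. Hence the map $\phi$ of Definition~\ref{def:identifiability} is not injective, so $F$ is at best identifiable up to orthogonal transformation and is therefore not generically identifiable. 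Equivalently, $\node{L}_a$ and $\node{L}_b$ form a pair of latent variables with no differing parent or child, which is precisely the situation excluded by the necessary condition in Corollary~\ref{cor:orthogonal_indeterminacy_basic}. Taking the contrapositive yields the statement: if $\graph$'s parameters are identifiable, then every atomic cover contains at most one latent variable.

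The main obstacle is the structural claim itself, that the latent members of one atomic cover share identical parents and children (with no determined internal directed edge). This does not follow from Definition~\ref{definition:ac} read in isolation; it requires combining the minimality in condition (iii) with the specific way Conditions~\ref{cond:basic} and~\ref{cond:vstructure} force covers to connect uniformly to their pure children and neighbours, together with the fact that we only work up to the MEC. The key point to formalize is that any asymmetry in the parent/child sets of two latent members would either allow the cover to be refined into a finer partition (contradicting condition (iii)), or correspond to an edge whose endpoint or orientation is undetermined within the equivalence class (so that a representative DAG with symmetric latent members can be chosen). Once ``indistinguishable within the cover implies identical parent/child sets'' is established, the appeal to Theorem~\ref{thm:orthogonal_indeterminacy} and the passage to the contrapositive are immediate.
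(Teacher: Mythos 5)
Your proposal follows essentially the same route as the paper's proof: suppose some atomic cover contains two or more latent variables, observe that those latent variables share the same parents and children, and invoke Theorem~\ref{thm:orthogonal_indeterminacy} to exhibit an orthogonal transformation indeterminacy, contradicting identifiability. The one difference is that the structural claim you flag as the main obstacle is, in the paper, taken as immediate from the definition of the equivalence class itself (the operators $\mathcal{O}_s$ and $\mathcal{O}_{\text{min}}$ force the latent members of a cover to attach identically to parents and children in every DAG of the class), so neither the refinement argument via condition (iii) nor the choice-of-representative fallback you sketch --- which would in any case sit uneasily with the ``for any DAG in the equivalence class'' quantifier --- is needed.
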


\begin{remark} [Necessity of Conditions in Theorem~\ref{thm:identifiability}]
  
  Condition (i) in Theorem ~\ref{thm:identifiability} is provably necessary: by Corollary~\ref{cor:orthogonal_indeterminacy}, for parameters to be identifiable, one has to assume (i) in Theorem ~\ref{thm:identifiability}. 
  \vspace{-0.5em}
\end{remark}

 Establishing a necessary and sufficient condition is always highly non-trivial in various tasks.
 For example, for the identification of linear non-Gaussian causal structure with latent variables, researchers initially developed sufficient conditions with three pure children in \cite{shimizu2009estimation}, later relaxed to two in \cite{cai2019triad,xie2020generalized}, before ultimately achieving both necessary and sufficient conditions in \cite{adams2021identification}.
 Similarly, for parameter identification, although the condition we proposed is not a necessary and sufficient one, it could serve as a stepping stone towards tighter and ultimately the necessary and sufficient condition for the field.\looseness=-1

Below, we also provide a sufficient condition for parameter identifiability that does not rely on structure identifiability in Theorem~\ref{thm:identifiability_without_structure_identifiability}. It is particularly useful when the structure is directly given by some domain experts.

\begin{restatable}[Sufficient Condition for Parameter Identifiability (up to group sign) without Requiring Structure Identifiability]{theorem}{TheoremSufficientConditionIdentifiabilityWithoutStructureIdentifiability}\label{thm:identifiability_without_structure_identifiability}
 In $\graph$, if for every latent variable $\node{L}$
 there always exist another three distinct variables (which can be latent or observed),
 such that two of the three are pure children of $\node{L}$
 and the rest one is a neighbor of $\node{L}$, then the parameters are identifiable. 
\vspace{-0em}
\end{restatable}

Identifiability theory often focuses on the asymptotic case, i.e., we assume that we know the structure and the population covariance matrix $\Sigma_{\set{X}}$.
However, in practice, we only have access to i.i.d. data with finite sample size and thus only have the sample covariance matrix. 
Therefore, in the next section, we will propose a novel method to estimate the parameters in the finite sample cases.\looseness=-1

\vspace{-1em}
\section{Parameter Estimation Method}
\vspace{-0.5em}
\subsection{Objective}
\vspace{-0.5em}
Our goal is to estimate $F$ in Definition~\ref{definition:polcm}, given the causal structure $\graph$ and observational data. The key is to parameterize the population covariance  $\Sigma_\set{X}$ using $\theta=(F,\Omega)$ and then maximize the likelihood of observed sample covariance  $\hat{\Sigma}_\set{X}$.
To make this technically precise, we provide a closed-form expression
of $\Sigma_{\set{X}}$ in terms of $\theta$ in the following proposition, with a proof given in \cref{sec:proof_covariance_matrix}.
\begin{restatable}[Parameterization of Population Covariance]{proposition}{PropositionCovarianceMatrix}\label{proposition:covariance_matrix}
Consider the model defined in Def.~\ref{definition:polcm}. Let
$M\coloneqq\left(I-F_{\set{L}\set{L}}-F_{\set{L}\set{X}}(I-F_{\set{X}\set{X}})^{-1} F_{\set{X}\set{L}}\right)^{-1}$ and 
$N\coloneqq\left((I-F_{\set{L}\set{L}})F_{\set{X}\set{L}}^{-1}(I-F_{\set{X}\set{X}})-F_{\set{L}\set{X}}\right)^{-1}$.
Then, the population covariance matrices of $\set{L}$ and $\set{X}$ can be formulated as
\begin{flalign}
\Sigma_{\set{L}}=&M^{T} \Omega_{\epsilon_\set{L}} M + N^T\Omega_{\epsilon_\set{X}} N,\\
\Sigma_{\set{X}}=& (I-F_{\set{X}\set{X}})^{-T}\Big(F_{\set{L}\set{X}}^{T} \Sigma_{\set{L}_\graph} F_{\set{L}\set{X}}+\Omega_{\epsilon_\set{X}}
+\Omega_{\epsilon_\set{X}} N F_{\set{L}\set{X}}+F_{\set{L}\set{X}}^{T} N^T \Omega_{\epsilon_\set{X}}\Big)(I-F_{\set{X}\set{X}})^{-1}.
\end{flalign}
\end{restatable}
\vspace{-1em}
The formulations of $\Sigma_{\set{L}}$ and $\Sigma_{\set{X}}$ are rather complicated due to the general scenario we considered, i.e., latent variables
can be the cause or the effect of latent and observed variables. That is, the submatrices $F_{\set{L}\set{L}},F_{\set{L}\set{X}},F_{\set{X}\set{L}}$ and $F_{\set{X}\set{X}}$ defined in the above proposition can all have nonzero entries. In most existing works, at least one of these submatrices are assumed to be zero. For instance, factor analysis assumes that $F_{\set{L}\set{L}},F_{\set{X}\set{L}}$ and $F_{\set{X}\set{X}}$ are zero, while \cite{leung2015identifiability} assumes that $F_{\set{L}\set{L}}$ and $F_{\set{X}\set{L}}$ are zero. Furthermore, \cref{proposition:covariance_matrix} also provides insight into the indeterminacy involved when identifying the parameters, such as the indeterminacy of  variance in \cref{thm:indeterminacy_scaling_variance} and the orthogonal transformation indeterminacy in \cref{thm:orthogonal_indeterminacy}.\looseness=-1


Similar to factor analysis \citep{shapiro1985identifiability,bekker1997generic,gorsuch2014factor},
we assume $\epsilon_\set{V}$ are Gaussian and thus $\set{X}$ are jointly Gaussian.
Thus, the negative log-likelihood of observational data can be formulated as
\begin{align}
\label{eq:nll}
\mathcal{L}=(K/2)(
\operatorname{tr}((\Sigma_{\set{X}})^{-1}\hat{\Sigma}_{\set{X}})+\log \det \Sigma_{\set{X}}),
\end{align}
where $K$ is the number of i.i.d. observations.
With the parameterized negative log-likelihood, we estimate the edge coefficients by minimizing the negative log-likelihood, as 
\vspace{-0em}
\begin{align}
\label{eq:objective}
\hat{F}, \hat{\Omega}=\arg\min_{F,\Omega} 
~ \mathcal{L},
\quad \text{subject to}~ 
\Omega_{\epsilon_\set{L}}=I,
\end{align}
where the entries of matrix $F$ that do not correspond to an edge in $\graph$ are constrained to be zero during the optimization.

%
Note that in Eq.~\eqref{eq:objective} the constraint 
that the noise terms of latent variables
have unit variance is crucial to deal with the variance indeterminacy defined in Theorem~\ref{thm:indeterminacy_scaling_variance}.
In practice, it is also favorable to use another constraint to address the variance indeterminacy, i.e., the constraint  that 
all the latent variables have unit variance.
This leads to an alternative objective as 
\vspace{-0em}
\begin{align}
\label{eq:objective_tr}
\hat{F}, \hat{\Omega}=\arg\min_{F,\Omega} 
~ \mathcal{L},
\quad \text{subject to}~ 
(\Sigma_{\set{L}})_{ii}=1,~i\in [m],
\end{align}
where the entries of $F$ that do not correspond to an edge in $\graph$ are also constrained to be zero.

Both objectives in Eqs.~\eqref{eq:objective} and~\eqref{eq:objective_tr} can be employed,
and yet using the second one gives rise to edge coefficients that are easier to understand.
  To be concerete, if we normalize all observed variables to have unit variance, then using Eq.~\eqref{eq:objective_tr} would give rise to $\hat{F}$ such that $-1\leq \hat{F}_{i,j}\leq 1, \forall i,j\in[m]$. An example can be found in Figure~\ref{fig:bigfive_params}.
%
However, it may not be straightforward to realize the constraint in Eq.~\eqref{eq:objective_tr}.
%
To this end, in the next section we introduce a way to parameterize $\Sigma_{\set{X}}$ using $F$, such that the required constraint in Eq.~\eqref{eq:objective_tr} can be automatically satisfied. Later in Section~\ref{sec:synthetic_data}, we also empirically compare the performance of using Eq.~\eqref{eq:objective} with that of using Eq.~\eqref{eq:objective_tr}.

\vspace{-0.5em}
\subsection{Parameterization Trick of  Covariance Matrix}
\vspace{-0.5em}
In this section, we introduce how trek rules can be employed to parameterize $\Sigma_{\set{X}}$
while the unit variance constraint on latent variables in Eq.~\eqref{eq:objective_tr} can be elegantly satisfied. We start with the definition of trek. For readers who are less familiar with treks, please refer to Appendix~\ref{appendix: example of trek} for  examples. 

\begin{definition} [Treks \citep{sullivant2010trek}]
   In $\graph$, a trek from $\node{X}$ to $\node{Y}$ is an ordered pair of directed paths 
   $(P_1,P_2)$ where $P_1$ has a sink $\mathsf{X}$, 
   $P_2$ has a sink $\mathsf{Y}$,
    and both $P_1$ and $P_2$ have the same source $\node{Z}$, i.e., $\text{top}(P_1,P_2)=\node{Z}$. A Trek is simple if $P_1$ and $P_2$ have no intersection except their common source $\node{Z}$. 
    \label{def:trek}
    \vspace{-0.5em}
\end{definition}

At this point, we are able to parameterize each entry of $\Sigma_{\set{X}}$ using ($F,\{\sigma_{ii}\}_{i=1}^{n+m}$), instead of ($F,\Omega$), 
by making use of the (simple) trek rule \citep{sullivant2010trek},
as follows:
\begin{align}
\label{eq:trek_rule}
\sigma_{ij}=\sum_{P_1,P_2\in \setset{S}(\node{V_i},\node{V_j})} \sigma_{\text{top}(P_1,P_2)} f^{P_1}f^{P_2},
\end{align}
where $\setset{S}(\node{V_i},\node{V_j})$ is the set of all simple treks between 
$\node{V_i}$ and $\node{V_j}$, and $f^{P}$ is the 
path monomial along $P$ defined as $f^{P}:=\Pi_{k\rightarrow l\in P} f_{kl}$.

By this form of parameterization, we can simply set all entries of $\{\sigma_{ii}\}_{i=1}^{n+m}$ as 1 (which is equivalent to requiring all variables to have unit variance), such that the constraint in Eq.~\eqref{eq:objective_tr}
 can be automatically satisfied. 
 For a better understanding of how to use the simple trek rule for parameterization, we provide an example as follows.
\begin{example} [Example for Parameterization using Simple Trek]
\label{example:parameterization_by_trekrule}
In Figure~\ref{fig:example_parameterization}
(a), there are four simple treks between $\node{X_4}$ and $\node{X_5}$, as shown in (b).
By the simple trek rule and further assuming that all variables have unit variance, the covariance between $\node{X_4}$ and $\node{X_5}$, $\sigma_{4,5}$, can be formulated as $f_{1,4}f_{1,5}+f_{3,4}f_{3,5}+f_{2,1}f_{1,4}f_{2,3}f_{3,5}+f_{2,3}f_{3,4}f_{2,1}f_{1,5}.$
\end{example}



\vspace{-1em}
\section{Experiments}
\label{sec:experiments}
\vspace{-0.5em}
We validate our identifiability theory and parameter estimation method on synthetic and real-life data.
\vspace{-1.5em}
\subsection{Setting and Evaluation Metric}
\vspace{-0.5em}
%
We begin with our experimental setting of synthetic data.
The causal strength $f_{ij}$ is uniformly sampled from $[-2, 2]$
 and the noise terms are Gaussian with variance uniformly from $[1,5]$.
 We consider 20 graphs. 10 of them should be parameter-identifiable up to group sign indeterminacy according to our identifiability theory and we refer to them as \textit{GS Case} (examples in Figure~\ref{fig:GSCases} in Appendix). Another 10 should be 
 parameter-identifiable up to 
 group sign and orthogonal transformation indeterminacy and we refer to them as \textit{OT Case} (examples in Figure~\ref{fig:OTCases} in Appendix).
 On average each graph contains 15 variables, 3 out of them are latent.
 We consider three different sample sizes: 2k, 5k, and 10k. We use three random seeds to generate the causal model and report the mean performance as well as the std.\looseness=-1

\begin{table}
\caption{Experimental result on synthetic data using MSE (mean (std)).}
\vspace{1mm}
\begin{subtable}{.47\linewidth}
  \footnotesize
  \center 
  \begin{center}
  \begin{tabular}{|c|c|c|c|}
    \hline  \multicolumn{2}{|c|}{} &\multicolumn{2}{|c|}{\textbf{MSE up to group sign}}\\
    \hline 
     \multicolumn{2}{|c|}{Method} & Estimator & {Estimator-TR}  \\
    \hline
    & 2k 
    &  0.0023 (0.002) & {0.0012} (0.0005) \\
    \cline{2-4}
    {\small{GS Case}}
    &5k 
    &  0.0014 (0.002) & {0.0005} (0.0005) \\
    \cline{2-4}
    &10k
    &  0.0012 (0.001) &{0.0003} (0.0004)\\
    \hline 
  \end{tabular}
  \end{center}
  \vspace{-0em}
      \caption{MSE up to group sign indeterminacy.}
   \label{tab:result1}
  \hfill
\end{subtable}
\hfill
\begin{subtable}{.47\linewidth}
\vspace{-0mm}
  \footnotesize
  \center 
  \begin{center}
  \begin{tabular}{|c|c|c|c|}
    \hline  \multicolumn{2}{|c|}{} &\multicolumn{2}{|c|}{\textbf{MSE up to orthogonal.}}\\
    \hline 
     \multicolumn{2}{|c|}{Method} & Estimator & {Estimator-TR}  \\
    \hline
    & 2k 
    & 0.0278 (0.008) & {0.0355} (0.015)\\
    \cline{2-4}
    {\small{OT Case}}
    &5k 
    & 0.0194 (0.002) & {0.0352} (0.012)\\
    \cline{2-4}
    &10k
    & 0.0182 (0.003)  & {0.0351} (0.015) \\
    \hline 
  \end{tabular}
  \end{center}
  \vspace{-0em}
    \caption{MSE up to orthogonal transformation.}
   \label{tab:result2}
  \hfill
\end{subtable}
\vspace{-1.5em}
\end{table}

 \begin{figure}[t]
  \centering 
\includegraphics[width=0.85\linewidth]{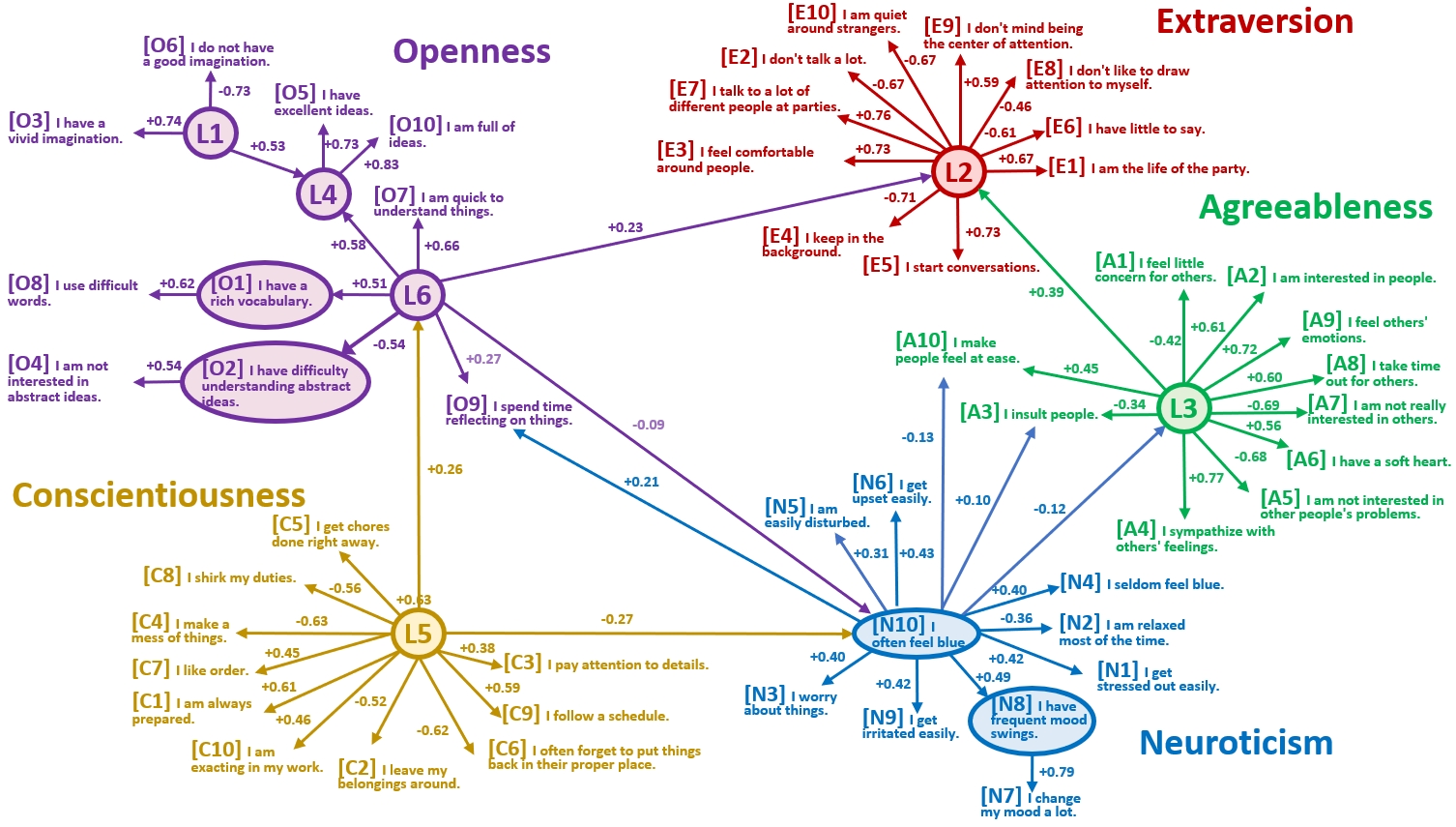}
\vspace{-0.4em}
  \caption{\small Estimated edge coefficients by the proposed method for Big Five human personality dataset. Variables whose name starts with "L" are latent variables while the others are observed variables.}  
  \label{fig:bigfive_params}
  \vspace{-1em}
\end{figure}

As the optimization in Eq~\eqref{eq:objective} is nonconvex, we will rely on 30 random starts and choose the one with the best likelihood.
We report the performance of the proposed method with two different objectives.
\textbf{(i)} Parameter Estimator with objective defined in Eq.~\eqref{eq:objective}, referred to as Estimator, and \textbf{(ii)} Parameter Estimator 
with objective defined in Eq.~\eqref{eq:objective_tr} and Trek Rule parameterization trick in Eq~\eqref{eq:trek_rule}, referred to as Estimator-TR.
%
\looseness=-1

 It is worth noting that our setting is very general in that we allow latent variables and observed variables to be causally connected in a very flexible way, and we consider the identification of parameters of edges that can involve both observed and latent variables.
Therefore, to the best of our knowledge, no current method can achieve the same goal to serve as the baseline (which also shows the novelty of the proposed method).
As such, we mainly focus on comparing our estimation result with the ground truth parameters.
We use two MSE-based metrics defined as follows.

\textbf{MSE up to group sign:} suppose the ground truth parameter is $F$ and our estimation is $\hat{F}$. The MSE up to group sign is defined as
$
\frac{\||F|-|\hat{F}|\|^2_{2}}{\|F\|_{0}},
$
where $|\cdot|$ takes element wise absolute value, $\|\cdot\|_{2}$ denotes the Frobenius norm and $\|\cdot\|_{0}$ denotes the number of nonzero entries of a matrix.

\textbf{MSE up to orthogonal transformation:} the MSE up to orthogonal transformation is defined as
{\fontsize{8}{8}
\begin{align}
\label{eq:mse_orthogonal}
\min_{Q:Q^TQ=I}~\frac{\||F_{\set{L}\set{L}}|-|Q^T\hat{F}_{\set{L}\set{L}}Q|\|^2_{2}+\||F_{\set{L}\set{X}}|-|Q^T\hat{F}_{\set{L}\set{X}}|\|^2_{2}+\||F_{\set{X}\set{L}}|-|\hat{F}_{\set{X}\set{L}}Q|\|^2_{2}+\||F_{\set{X}\set{X}}|-|\hat{F}_{\set{X}\set{X}}|\|^2_{2}}{\|F\|_{0}},
\end{align}}
\vspace{-1mm}
where the optimization is solved by Adam \citep{kingma2014adam} and the orthogonal matrix $Q$ can be directly parameterized in PyTorch.

\vspace{-0.5em}
\subsection{Synthetic Data Performance}
\vspace{-0.5em}
\label{sec:synthetic_data}
We report the performance using synthetic data in Tables~\ref{tab:result1}~and~\ref{tab:result2},
where both our Estimator and  Estimator-TR achieve very good identification performance.
For example, in the GS scenario with 10k samples, our Estimator achieves 0.0.0012 MSE up to group sign and our Estimator-TR achieves 0.0003 MSE up to group sign.
The good performance by Estimator and Estimator-TR not only validates our estimation method, but also empirically verifies our identifiability theory. 

\vspace{-0.5em}
\subsection{Misspecification Behavior}
\vspace{-0.5em}
In this section, we show that the proposed estimation method still performs well, even under model  misspecification:
violation of normality and violation of linearity.\looseness=-1

As for violation of normality, we use uniform noise terms for the underlying model, and thus the distribution is not jointly Gaussian anymore. We aim to see to what extent can the proposed method still recover the correct parameters.
The result is shown in Table~\ref{tab:result3} in the Appendix, which shows even when the normality is violated, we can still estimate the parameters pretty well. 
The reason lies in that
our proposed asymptotic identifiability result  holds true,
even 
when we do not assume Gaussianity; as we only make use of the second-order statistics of the distribution, the additive
noise in Definition~\ref{def:linear_models} can follow any other continuous distribution.

To simulate the violation of linearity, we employ the leaky ReLU 
function during the generation process, as $\mathsf{V_{i}}=\text{LRELU}(\sum \nolimits_{\mathsf{V_{j}} \in \text{Pa}(\mathsf{V_{i}})} f_{ji} \mathsf{V_{j}} + \epsilon_{\mathsf{V_{i}}})$, $\text{LRELU}(x)=\max(\alpha x,x)$. When $\alpha$ is close to 1, the function is close to a linear one, and when $\alpha$ is close to 0, the model is very nonlinear.
The result is shown in Table~\ref{tab:result4} and we found that our estimation method is quite robust to small violations of linearity. For example, for Estimator-TR in GS case with 10k sample size, if we set $\alpha=0.8$, we still get a small MSE of 0.001. Even when $\alpha$ decreases to 0.6, the MSE is around 0.005, which is still small. However, when $\alpha$ is decreased to $0.3$, the underlying model is considerably nonlinear, and the MSE increases to 0.027.

\vspace{-0.5em}
\subsection{Implementation Details, Runtime Analysis, and Scalability}
\vspace{-0.5em}
Our code is based on Python3.7 and PyTorch \citep{paszke2017automatic}. 
Data is standardized and the optimization in Eqs.~\eqref{eq:objective}, \eqref{eq:objective_tr}, and \eqref{eq:mse_orthogonal} are solved by  Adam~\citep{kingma2014adam}, with a learning rate of $0.02$.
We conduct all the experiments with single Intel(R) Xeon(R) CPU E5-2470. 
All experiments can be finished within $2$ hours. 
%
We note that our method is very computationally efficient.
First, the computational cost is almost irrelevant to sample size: we only need to calculate the sample covariance matrix once and cache it for further use during the optimization.
Plus,
our estimation method can handle a large number of variables. 
For example, the running time of our method are roughly 10 seconds, 2 minutes, and 10 minutes for 20 variables, 50 variables, and 100 variables respectively. 
For 300 variables, which is a considerably large number for typical experiments considered in causal discovery papers, the estimation can still be finished within around one hour.

It is also worth noting that model misspecifications do not influence the computation cost of our method.
We briefly discuss the efficiency of checking whether conditions in Theorem~\ref{thm:identifiability} hold, together with what if conditions do not hold in solving real-life problems in Appendices~\ref{appendix: check condition1} and \ref{appendix: check condition2}.\looseness=-1

\vspace{-0.8em}
\subsection{Real-World Data Performance}
\vspace{-0.4em}
In this section, we employ a famous psychometric dataset - Big Five dataset  \url{https://openpsychometrics.org/},
to validate our method. It consists of 50  indicators and close to 20,000 data points. There are five dimensions: Openness, Conscientiousness, Extraversion, Agreeableness, and Neuroticism (O-C-E-A-N).
Each is measured with 10 indicators. Data is standardized.
We employ the RLCD method \citep{dong2023versatile} to determine the MEC and  GIN~\citep{xie2020generalized} to decide the remaining directions. Then we employ the proposed Estimator-TR to estimate all the edge coefficients.
The structure satisfies the condition in Theorem~\ref{thm:identifiability_without_structure_identifiability} so we know that the parameters are identifiable.\looseness=-1

The estimated edge coefficients are  shown in Figure~\ref{fig:bigfive_params}.
We found that our estimated coefficients are well aligned with existing psychology studies.
For example, according to \cite{DeYoung2006, DeYoung2015}, being successful in exploratory endeavors depends on the stability to pursue them. This is illustrated in our result where  L5$\xrightarrow{{\tiny+\:0.26}}$L6 and L3$\xrightarrow{\small+\:0.39}$L2 indicates that Conscientiousness positively influence openness and Agreeableness positively influences Extraversion.
Moreover, it has been shown that people are likely to weigh the outcomes of their actions, thus, their level of Conscientiousness coupled with Neuroticism 
may prohibit them from engaging in risky behaviors (L5$\xrightarrow{-\:0.27}$N10$\xrightarrow{-\:0.12}$L3$\xrightarrow{+\:0.39}$L2)~\cite{Turiano2013}. 
Such consistency with current psychometric studies again validates the effectiveness of the proposed method in parameter estimation of real-life systems.\looseness=-1

\section{Conclusion}
\vspace{-0.5em}
In this paper, we characterize indeterminacy of parameter identification and provide conditions for identifiability.
Finally, we propose a novel estimation method and validate it by empirical study.

\section{Acknowledgement}
This material is based upon work supported by NSF Award No. 2229881, AI Institute for Societal Decision Making (AI-SDM), the National Institutes of Health (NIH) under Contract R01HL159805, and grants from Salesforce, Apple Inc., Quris AI, Florin Court Capital, and the MBZUAI-WIS grant.

\bibliographystyle{plain}
\bibliography{ref}


\newpage
\appendix
\section{Proofs}
\subsection{Proof of Theorem~\ref{thm:indeterminacy_scaling_variance}}\label{sec:proof_scaling}
\TheoremScalingIndeterminacy*

\begin{proof}[Proof of Theorem~\ref{thm:indeterminacy_scaling_variance}]
Let $\begin{pNiceMatrix}
F_{\set{L}\set{L}}   & F_{\set{L}\set{X}}  \\
F_{\set{X}\set{L}}   & F_{\set{X}\set{X}} 
\end{pNiceMatrix}
= \begin{pNiceMatrix}
A   & B  \\
C   & D
\end{pNiceMatrix}$
and 
$\begin{pNiceMatrix}
\tilde{F}_{\set{L}\set{L}}   & \tilde{F}_{\set{L}\set{X}}  \\
\tilde{F}_{\set{X}\set{L}}   & \tilde{F}_{\set{X}\set{X}} 
\end{pNiceMatrix}
= \begin{pNiceMatrix}
\tilde{A}   & \tilde{B}  \\
\tilde{C}   & \tilde{D}
\end{pNiceMatrix}$.
Let $M$ and $N$ be matrices defined as in \cref{proposition:covariance_matrix}, and similarly for $\tilde{M}$ and $\tilde{N}$.
We then have
\begin{align*}
\tilde{M}&=\left(I-\tilde{A}-\tilde{B}(I-\tilde{D})^{-1} \tilde{C}\right)^{-1}\\
&=\left(\Lambda^{-1}\Lambda-\Lambda^{-1}A\Lambda-(\Lambda^{-1}B)(I-D)^{-1} (C\Lambda)\right)^{-1}\\
&=\Lambda^{-1}\left(I-A-B(I-D)^{-1} C\right)^{-1}\Lambda\\
&=\Lambda^{-1}M\Lambda
\end{align*}
and
\begin{align*}
\tilde{N}&=\left((I-\tilde{A})\tilde{C}^{-1}(I-\tilde{D})-\tilde{B}\right)^{-1}\\
&=\left((\Lambda^{-1}\Lambda-\Lambda^{-1}A\Lambda)(C\Lambda)^{-1}(I-D)-\Lambda^{-1}B\right)^{-1}\\
&=\left((I-A)C^{-1}(I-D)-B\right)^{-1}\Lambda\\
&=N\Lambda.
\end{align*}
By \cref{proposition:covariance_matrix}, the latent covariance matrix $\tilde{\Sigma}_\set{L}$ after rescaling of the parameters is given by
\begin{align*}
\tilde{\Sigma}_{\set{L}}&=\tilde{M}^{T} \tilde{\Omega}_{\epsilon_\set{L}} \tilde{M} + \tilde{N}^T\tilde{\Omega}_{\epsilon_\set{X}} \tilde{N}\\
&=(\Lambda^TM^T\Lambda^{-T})(\Lambda\Omega_{\epsilon_\set{L}}\Lambda)(\Lambda^{-1}M\Lambda)+\Lambda^TN^T \Omega_{\epsilon_\set{X}} N\Lambda\\
&=\Lambda(M^{T} \Omega_{\epsilon_\set{L}} M + N^T\Omega_{\epsilon_\set{X}} N)\Lambda\\
&=\Lambda\Sigma_{\set{L}} \Lambda.
\end{align*}
This implies that the variance of each latent variable $\node{L}_i$ is scaled by $\Lambda_{ii}^2$.
By \cref{proposition:covariance_matrix}, the observed covariance matrix $\tilde{\Sigma}_\set{X}$ after rescaling of the parameters is given by
\begin{flalign*}
\tilde{\Sigma}_\set{X}&=(I-\tilde{D})^{-T}\Big(\tilde{B}^{T} \tilde{\Sigma}_{\set{L}} B+\tilde{\Omega}_{\epsilon_\set{X}}+\tilde{\Omega}_{\epsilon_\set{X}} \tilde{N} \tilde{B}+\tilde{B}^{T} \tilde{N}^T \tilde{\Omega}_{\epsilon_\set{X}}\Big)(I-\tilde{D})^{-1}\\
&=(I-D)^{-T}\Big((\Lambda^{-1}B)^{T} (\Lambda\Sigma_{\set{L}} \Lambda) (\Lambda^{-1}B)\\
&\qquad\qquad\qquad\qquad+\Omega_{\epsilon_\set{X}}+\Omega_{\epsilon_\set{X}} (N\Lambda) (\Lambda^{-1}B)+(\Lambda^{-1}B)^{T} (N\Lambda)^T \Omega_{\epsilon_\set{X}}\Big)(I-D)^{-1}\\
& =(I-D)^{-T}\Big(B^{T} \Sigma_{\set{L}} B+\Omega_{\epsilon_\set{X}}+\Omega_{\epsilon_\set{X}} N B+B^{T} N^T \Omega_{\epsilon_\set{X}}\Big)(I-D)^{-1}\\
&=\Sigma_\set{X}.
\end{flalign*}
\end{proof}


\subsection{Proof of Theorem~\ref{thm:indeterminacy_group_sign}}\label{sec:proof_groupsign}
\TheoremGroupSignIndeterminacy*

\begin{proof}[Proof of Theorem~\ref{thm:indeterminacy_group_sign}]
Let $\begin{pNiceMatrix}
F_{\set{L}\set{L}}   & F_{\set{L}\set{X}}  \\
F_{\set{X}\set{L}}   & F_{\set{X}\set{X}} 
\end{pNiceMatrix}
= \begin{pNiceMatrix}
A   & B  \\
C   & D
\end{pNiceMatrix}$
and 
$\begin{pNiceMatrix}
\tilde{F}_{\set{L}\set{L}}   & \tilde{F}_{\set{L}\set{X}}  \\
\tilde{F}_{\set{X}\set{L}}   & \tilde{F}_{\set{X}\set{X}} 
\end{pNiceMatrix}
= \begin{pNiceMatrix}
\tilde{A}   & \tilde{B}  \\
\tilde{C}   & \tilde{D}
\end{pNiceMatrix}$.
Since $S$ is a diagonal sign matrix,
we have
   \[\tilde{A}\coloneqq S^{-1}AS,\quad \tilde{B}\coloneqq S^{-1}B,\quad\tilde{C}\coloneqq CS, \quad\tilde{D}\coloneqq D, \quad\tilde{\Omega}_{\epsilon_\set{L}}\coloneqq S^2\Omega_{\epsilon_\set{L}}, \quad\text{and}\quad\tilde{\Omega}_{\epsilon_\set{X}}\coloneqq \Omega_{\epsilon_\set{X}}.
    \]
Note that $S$ is an invertible diagonal matrix. By Theorem~\ref{thm:indeterminacy_scaling_variance},
we have $\tilde{\Sigma}_{\set{X}_\graph}=\Sigma_{\set{X}_\graph}$
and $\tilde{\Sigma}_{\set{L}_\graph}=S\Sigma_{\set{L}_\graph}S$,
and thus $(\tilde{\Sigma}_{\set{L}_\graph})_{ii}=(\Sigma_{\set{L}_\graph})_{ii},~\forall i\in [m]$.
\end{proof}
\subsection{Proof of Theorem~\ref{thm:identifiability}}\label{sec:proof_identifiability}
The structure identifiability part
is 
that 
if $\graph$ satisfies Condition~\ref{cond:basic} and Condition~\ref{cond:vstructure}, the structure of $\graph$ can be identified up to the  Markov equivalence class of $\mathcal{O}_{\text{a}}(\mathcal{O}_s(\graph))$,
which is by Theorem 12 in \cite{dong2023versatile}.

Next we will focus on the proof of parameter identifiability part,
i.e.,
for any DAG in the equivalence class,
if (i) and (ii) in Theorem~\ref{thm:identifiability} are satisfied, the parameters are identifiable (up to group sign). 
Without loss of generality, we assume that all variables have unit variance and zero mean. The reason is that if we can show that the parameters are identfiable (up to group sign) under this assumption, then it is straightforward to show that they are also identifiable under the original assumption where $\Omega_{\epsilon_\set{L}}=I$.

 \begin{figure}[t]
  \centering 
 \includegraphics[width=0.5\linewidth]{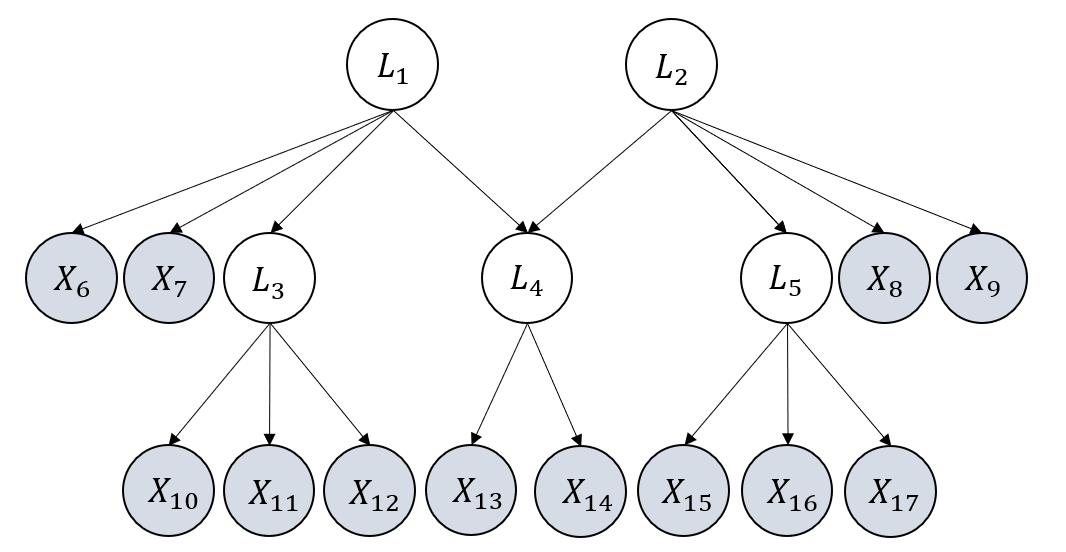}
  \caption{A simple graph that satisfies conditions in Theorem~\ref{thm:identifiability}, as for each atomic cover with one latent variable, it has no observed variable.}  
  \label{fig:for_identifiability_1}
  \vspace{-0mm}
\end{figure}

 \begin{figure}[t]
  \centering 
 \includegraphics[width=0.4\linewidth]{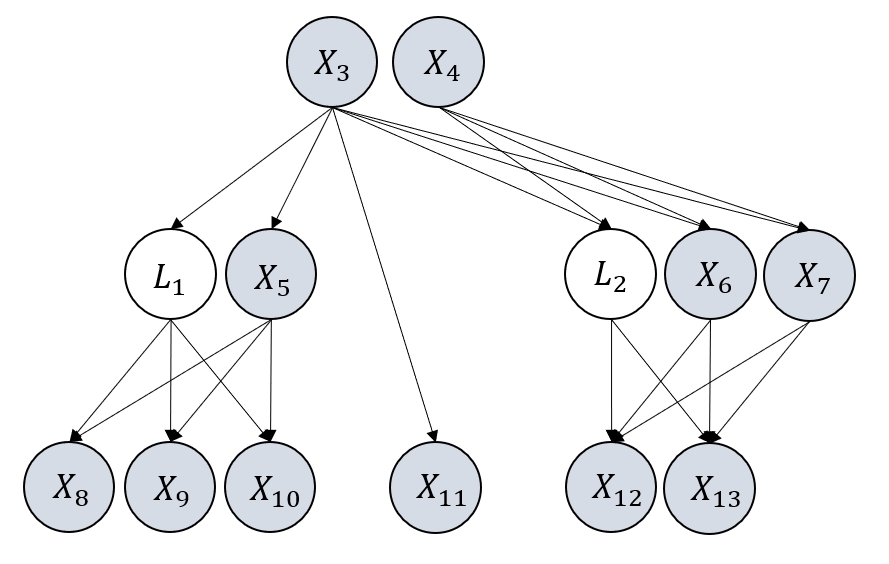}
  \caption{A more complicated graph that satisfies conditions in Theorem~\ref{thm:identifiability} and there is an atomic cover that  has one latent variable and nonzero observed variables, e.g., $\{\node{L_1},\node{X_5}\}$ in the graph. The condition (ii) in Theorem~\ref{thm:identifiability} is satisfied in that $\node{L_1},\node{X_5}$ can be d-separated by $\node{X_3}$.}  
  \label{fig:for_identifiability_2}
  \vspace{-0mm}
\end{figure}

\begin{lemma}\label{lemma:conditional_cov}
Let $\set{X},\set{Y}$ be two set of variables,
we have 
$\Sigma_{\set{Y}|\set{X}=x}=\Sigma_{\set{Y}}-\Sigma_{\set{Y}\set{X}}\Sigma_{\set{X}}^{-1}\Sigma_{\set{X}\set{Y}}$.
\end{lemma}

\begin{lemma}
Consider a graph $\graph$ that satisfies (i) and (ii) in Theorem~\ref{thm:identifiability}.
For an atomic cover $\set{V}$ in $\graph$ with one latent variable,  $\set{V}=\{\node{L}\}\cup\{\node{X_i}\}_{i=1}^{k}$ ($k$ could be zero), if it has an observed pure child $\node{C}$ and the coefficients of the edges from $\set{V}$ to $\node{C}$,  i.e., $f_{\node{L}\rightarrow\node{C}}$, $f_{\node{X_1}\rightarrow\node{C}}$, \dots, $f_{\node{X_k}\rightarrow\node{C}}$, are known, then for any variable $\node{A}$, such that $\node{A}\neq\node{C}$ and $\node{A}$ is not a descendant of $\node{C}$,  $\sigma_{\node{L},\node{A}}$ can be calculated as $(\sigma_{\node{A},\node{C}}-\Sigma_{i=1}^{k}\sigma_{\node{X_i},\node{A}}f_{\node{X_i}\rightarrow\node{C}}) /f_{\node{L}\rightarrow\node{C}}$.
\label{lemma:for_thm_identifiability}
\end{lemma}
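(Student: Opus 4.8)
The plan is to exploit the purity of the child $\node{C}$: because $\node{C}$ is a pure child of the atomic cover $\set{V}=\{\node{L}\}\cup\{\node{X_i}\}_{i=1}^{m}$, all of its parents lie in $\set{V}$, so its structural equation reads
\[
\node{C} = f_{\node{L}\rightarrow\node{C}}\,\node{L} + \sum_{i=1}^{m} f_{\node{X_i}\rightarrow\node{C}}\,\node{X_i} + \epsilon_{\node{C}}.
\]
First I would take the covariance of both sides with the variable $\node{A}$ and use bilinearity of covariance to obtain
\[
\sigma_{\node{A},\node{C}} = f_{\node{L}\rightarrow\node{C}}\,\sigma_{\node{A},\node{L}} + \sum_{i=1}^{m} f_{\node{X_i}\rightarrow\node{C}}\,\sigma_{\node{A},\node{X_i}} + \mathrm{Cov}(\node{A},\epsilon_{\node{C}}).
\]

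The heart of the argument is to show that the cross term $\mathrm{Cov}(\node{A},\epsilon_{\node{C}})$ vanishes. For this I would pass to the reduced form of the linear SEM, $\set{V}_\graph=(I-F^T)^{-1}\epsilon_{\set{V}_\graph}$, in which each variable is expressed as a linear combination of the noise terms of itself and its ancestors; concretely, $\epsilon_{\node{C}}$ enters the expansion of $\node{A}$ only if $\node{C}=\node{A}$ or there is a directed path from $\node{C}$ to $\node{A}$. The hypotheses $\node{A}\neq\node{C}$ and ``$\node{A}$ is not a descendant of $\node{C}$'' are exactly what rules out both possibilities, so $\epsilon_{\node{C}}$ does not appear in $\node{A}$. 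Since $\Omega$ is diagonal, the noise terms are mutually independent, and hence $\mathrm{Cov}(\node{A},\epsilon_{\node{C}})=0$.

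Finally I would substitute this back and solve for $\sigma_{\node{L},\node{A}}=\sigma_{\node{A},\node{L}}$, giving the claimed expression $\sigma_{\node{L},\node{A}}=\big(\sigma_{\node{A},\node{C}}-\sum_{i=1}^{m}\sigma_{\node{X_i},\node{A}}\,f_{\node{X_i}\rightarrow\node{C}}\big)/f_{\node{L}\rightarrow\node{C}}$, where dividing by $f_{\node{L}\rightarrow\node{C}}$ is justified because $\node{L}\rightarrow\node{C}$ is an edge of $\graph$ and thus $f_{\node{L}\rightarrow\node{C}}\neq 0$ by the convention in Definition~\ref{definition:polcm}. Since every step except the vanishing of the cross term is routine bilinearity, I expect the only delicate point to be that justification of $\mathrm{Cov}(\node{A},\epsilon_{\node{C}})=0$; the care needed there is simply to state precisely why ``not a descendant of $\node{C}$ and distinct from $\node{C}$'' removes $\epsilon_{\node{C}}$ from the support of $\node{A}$'s reduced form, after which diagonality of $\Omega$ closes the argument.
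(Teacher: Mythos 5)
Your proof is correct, and it reaches the paper's equation
$\sigma_{\node{A},\node{C}} = f_{\node{L}\rightarrow\node{C}}\,\sigma_{\node{L},\node{A}} + \sum_{i=1}^{m} f_{\node{X_i}\rightarrow\node{C}}\,\sigma_{\node{X_i},\node{A}}$
by a slightly different route. The paper argues via the (simple) trek rule: since $\node{C}$ is a pure child of $\set{V}$, every trek between $\node{C}$ and $\node{A}$ must enter $\node{C}$ through one of its parents, all of which lie in $\set{V}$, and no trek can have $\node{C}$ as its top or as an intermediate node on the $\node{A}$ side because $\node{A}$ is neither $\node{C}$ nor a descendant of $\node{C}$; grouping treks by their final edge into $\node{C}$ gives the identity directly. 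You instead work from the structural equation of $\node{C}$, use bilinearity of covariance, and kill the cross term $\mathrm{Cov}(\node{A},\epsilon_{\node{C}})$ by expanding $\node{A}$ in reduced form and invoking diagonality of $\Omega$ — which is exactly where the same two hypotheses (purity of $\node{C}$, and $\node{A}$ not a descendant of $\node{C}$ and distinct from it) do their work. The arguments are dual formulations of one computation: your version is more elementary and self-contained, needing no trek-rule machinery, while the paper's is shorter once the trek rule is taken as a black box; your explicit remark that $f_{\node{L}\rightarrow\node{C}}\neq 0$ (so the division is legitimate) is a point the paper leaves implicit.
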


\begin{proof} [Proof of Lemma~\ref{lemma:for_thm_identifiability}]
    By the definition of atomic covers, all variables in $\set{V}=\{\node{L}\}\cup\{\node{X_i}\}_{i=1}^{k}$ are not adjacent.
     By trek rule and the fact that $\node{C}$ is a pure child of $\set{V}$, all treks from $\node{C}$ to $\node{A}$ go through $\set{V}$,
     and thus by the trek rule we have
$\sigma_{\node{L},\node{A}}f_{\node{L}\rightarrow\node{C}}+\Sigma_{i=1}^{k}\sigma_{\node{X_i},\node{A}}f_{\node{X_i}\rightarrow\node{C}}=\sigma_{\node{A},\node{C}}$.
\end{proof}

\textbf{Remark:}
This lemma implies that we can find all edge coefficients of the graph in a bottom-up fashion. Roughly speaking, for a latent variable $\node{L}$ that belongs to an atomic cover $\set{V}$, once we identify 
$f_{\node{L}\rightarrow\node{C}}$, $f_{\node{X_1}\rightarrow\node{C}}$, ..., $f_{\node{X_k}\rightarrow\node{C}}$ where $\node{C}$ is an observed pure child of $\set{V}$, we can take $\node{L}$ as if it is observed. More specific explanations can be found in the following proof.

\TheoremSufficientConditionIdentifiability*
\begin{proof}[Proof of Theorem~\ref{thm:identifiability}]
Consider a graph $\graph$ that satisfies (i) and (ii) in Theorem~\ref{thm:identifiability}.
We first show that if all the pure children of an atomic cover $\set{V}$ are observed, then all the edge coefficients from the atomic cover to its children are identifiable (up to group sign).
To this end, we categorize the scenarios into four cases and prove them separately.

(a) $\set{V}=\{\node{X}\}$ contains a single observed variable. The proof for this case is trivial as the edge coefficient from $\node{X}$ to its pure child $\node{C}$ is simply $\sigma_{\node{X},\node{C}}$.

(b) $\set{V}=\{\node{L}\}$ contains a single latent variable. By Condition~\ref{cond:basic} there must exist $\node{C_1}$, $\node{C_2}$, and $\node{X_{N}}$, such that $\node{C_1}$, $\node{C_2}$ are pure children of $\set{V}$ and $\node{X_{N}}$ is an observed variable that has a trek to $\set{V}$.
Then we have \[\sigma_{\node{C_1},\node{C_2}}=f_{\node{L}\rightarrow\node{C_1}}f_{\node{L}\rightarrow\node{C_2}}, \quad \sigma_{\node{C_1},\node{X_{N}}}=f_{\node{L}\rightarrow\node{C_1}}\sigma_{\node{L},\node{X_{N}}},
\text{\quad and \quad} \sigma_{\node{C_2},\node{X_{N}}}=f_{\node{L}\rightarrow\node{C_2}}\sigma_{\node{L},\node{X_{N}}}.\]
By these three equations, we can solve $f_{\node{L}\rightarrow\node{C_1}}$ and $f_{\node{L}\rightarrow\node{C_2}}$. If $\set{V}$ has more than two pure children, we can prove the identifiability similarly in a pairwise fashion.

%

(c) $\set{V}=\{\node{L}\}\cup\{\node{X_i}\}_{i=1}^{k}$ ($k\geq1$) contains a single latent variable and $k$ observed variables, where $\set{V}$ has at least three pure children.
 We assume that there exist $\node{C_1}$, $\node{C_2}$, and $\node{C_3}$, such that $\node{C_1}$, $\node{C_2}$, and $\node{C_3}$ are pure children of $\set{V}$.
Let $\sigma_{\node{L}|\{\node{X_i}\}_{i=1}^{k}}=t$. In this case, we have
\begin{align}
\sigma_{\node{C_1},\node{C_2}|\{\node{X_i}\}_{i=1}^{k}}=t f_{\node{L}\rightarrow\node{C_1}}f_{\node{L}\rightarrow\node{C_2}}, \\ \quad \sigma_{\node{C_1},\node{C_3}|\{\node{X_i}\}_{i=1}^{k}}=tf_{\node{L}\rightarrow\node{C_1}}f_{\node{L}\rightarrow\node{C_3}}, \\
\sigma_{\node{C_2},\node{C_3}|\{\node{X_i}\}_{i=1}^{k}}=tf_{\node{L}\rightarrow\node{C_2}}f_{\node{L}\rightarrow\node{C_3}}.
\end{align}
By these three equations, we can solve $f_{\node{L}\rightarrow\node{C_1}}$,$f_{\node{L}\rightarrow\node{C_2}}$, and $f_{\node{L}\rightarrow\node{C_3}}$, with the only remaining free parameter $t$. In other words, we have $f_{\node{L}\rightarrow\node{C_1}}(t)$,$f_{\node{L}\rightarrow\node{C_2}}(t)$, and $f_{\node{L}\rightarrow\node{C_3}}(t)$. 

Next, we show that $\forall i=1,...,k, j=1,...,3, f_{\node{X_i}\rightarrow\node{C_j}}$ can be identified.
Specifically,
as all simple treks between $\set{L}$ and $\set{X}$ contain only observed variables except $\set{L}$, and $\set{L}$ and $\set{X}$ are not directly adjacent, there must exist $\set{\hat{X}}$ such that $\set{\hat{X}}$ d-separates $\set{L}$ and $\set{X}$.
Thus,
$f_{\node{X_i}\rightarrow\node{C_j}}\sigma_{\node{X_i}|\set{\hat{X}}\cup\set{X}\backslash\node{X_i}}=\sigma_{\node{X_i}\node{C_j}|\set{\hat{X}}\cup\set{X}\backslash\node{X_i}}$,
by which $f_{\node{X_i}\rightarrow\node{C_j}}$ can be solved.

Now we solve $t$.
The key is that all the edge coefficients along all simple treks between $\set{X}$ and $\set{L}$ can be identified,
by using Lemma~\ref{lemma:for_thm_identifiability},
with only one free parameter $t$. Thus, we can make use of simple trek rule to parameterize $\Sigma_{\node{L},\set{X}}$ as a function of $t$. By Lemma~\ref{lemma:conditional_cov},
$\sigma_{\node{L}|\set{X}}=t$ can also be formulated as a function of $\Sigma_{\node{L},\set{X}}$, and thus $t$ can be solved.
If $\set{V}$ has more than three pure children, we just choose all the combinations of any three pure children.


(d)  $\set{V}=\{\node{L}\}\cup\{\node{X_i}\}_{i=1}^{k}$ ($k\geq1$) contains a single latent variable and $k$ observed variables, where $\set{V}$ has two pure children. 
By Condition~\ref{cond:basic}, there must exist $\node{C_1}$, $\node{C_2}$ as the pure children of $\set{V}$. If there exists one additional pure child, then it is the same as (c). 
Plus, if there only exist these two pure children, there must exist 
$\node{V_{N}}$ such that $\node{V_{N}}$ is a neighbor of $\set{V}$.
If $\node{V_{N}}$ is observed, let $\node{X_{N}}=\node{V_{N}}$,
otherwise we recursively take $\node{X_{N}}$ as the observed pure children of $\node{V_{N}}$.

Let $\sigma_{\node{L}|\{\node{X_i}\}_{i=1}^{k}}=t$. In this case, we have
\begin{align}
\sigma_{\node{C_1},\node{C_2}|\{\node{X_i}\}_{i=1}^{k}}=t f_{\node{L}\rightarrow\node{C_1}}f_{\node{L}\rightarrow\node{C_2}}, \\ \quad \sigma_{\node{C_1},\node{X_N}|\{\node{X_i}\}_{i=1}^{k}}=tf_{\node{L}\rightarrow\node{C_1}}\sigma_{\node{L}\node{X_N}}, \\
\quad \sigma_{\node{C_2},\node{X_N}|\{\node{X_i}\}_{i=1}^{k}}=tf_{\node{L}\rightarrow\node{C_2}}\sigma_{\node{L}\node{X_N}}.
\end{align}
Similar to (c), by solving the above, we have $f_{\node{L}\rightarrow\node{C_1}}(t)$ and 
$f_{\node{L}\rightarrow\node{C_2}}(t)$.

Next, we show that $\forall i=1,...,k, j=1,...,2, f_{\node{X_i}\rightarrow\node{C_j}}$ can be identified.
Specifically,
as all simple treks between $\set{L}$ and $\set{X}$ contain only observed variables except $\set{L}$, and $\set{L}$ and $\set{X}$ are not directly adjacent, there must exist $\set{\hat{X}}$ such that $\set{\hat{X}}$ d-separates $\set{L}$ and $\set{X}$.
Thus,
$f_{\node{X_i}\rightarrow\node{C_j}}\sigma_{\node{X_i}|\set{\hat{X}}\cup\set{X}\backslash\node{X_i}}=\sigma_{\node{X_i}\node{C_j}|\set{\hat{X}}\cup\set{X}\backslash\node{X_i}}$,
by which $f_{\node{X_i}\rightarrow\node{C_j}}$ can be solved.

Now we solve $t$.
The key is that all the edge coefficients along all simple treks between $\set{X}$ and $\set{L}$ can be identified,
by using Lemma~\ref{lemma:for_thm_identifiability},
with only one free parameter $t$. Thus, we can make use of simple trek rule to parameterize $\Sigma_{\node{L},\set{X}}$ as a function of $t$. By Lemma~\ref{lemma:conditional_cov},
$\sigma_{\node{L}|\set{X}}=t$ can also be formulated as a function of $\Sigma_{\node{L},\set{X}}$, and thus $t$ can be solved.

Taking (a), (b), (c), (d) into consideration,
for a graph that satisfies the conditions in Theorem~\ref{thm:identifiability},
for an atomic cover $\set{V}$ in the graph,
if all pure children of it are observed, then all the edge coefficients from $\set{V}$ to its pure children can be identified.

Now, we will prove by induction 
to show that, for a graph that satisfies the conditions in Theorem~\ref{thm:identifiability},
for any atomic cover $\set{V}$ in the graph, all the edge coefficients from $\set{V}$ to its children can be identified,
and thus all the edge coefficients of the graph can be identified (the set of all edge coefficients in the graph is the union of the set of edge coefficients from each $\set{V}$ to each $\set{V}$'s children).

To this end, we first index all the atomic covers by the inverse causal ordering, such that leaf nodes have smaller indexes.
Then we have a sequence of atomic covers $\set{V_i}$, $i=1,...,C$ in the graph, where $C$ is the number of atomic covers in the graph.

(i) We show for $\set{V_i},i=1$,
all the edge coefficients from 
$\set{V_1}$ to its children can be identified. This is proved by considering (a) (b) (c) (d), 
as $\set{V_1}$'s children must be all observed; otherwise it cannot be indexed as 1.

(ii) We show that, for $i>1$, if 
for all $\set{V_j},1\leq j<i$,
all the edge coefficients from 
$\set{V_j}$ to $\set{V_j}$'s children has been identified,
then all the edge coefficients from $\set{V_i}$ to $\set{V_i}$'s children can also be identified.
This can be proved by combining (a) (b) (c) (d) with Lemma~\ref{lemma:for_thm_identifiability}.
If $\set{V_i}$ has children that are latent, then the latent children must belong to an atomic cover with a smaller index.
Therefore, as
all the edge coefficients from 
$\set{V_j}$ to $\set{V_j}$'s
children have been identified,
by the use of 
Lemma~\ref{lemma:for_thm_identifiability},
the latent children of 
$\set{V_i}$ can be taken as if they are observed.
Therefore, all the edge coefficients from $\set{V_i}$ to $\set{V_i}$'s children can also be identified.

Taking (i) and (ii) together, 
all the edge coefficients of the  graph can be identified.

\end{proof}


\subsection{Proof of Theorem~\ref{thm:identifiability_without_structure_identifiability}}
\TheoremSufficientConditionIdentifiabilityWithoutStructureIdentifiability*
 \begin{proof}[Proof of Theorem~\ref{thm:identifiability_without_structure_identifiability}]
 The proof is a special case of (b) in the proof of Theorem~\ref{thm:identifiability}.
 \end{proof}

\subsection{Proof of Theorem~\ref{thm:orthogonal_indeterminacy}}\label{sec:proof_orthogonal_indeterminacy}
\begin{lemma}\label{lemma:orthogonal_transformation}
Let $\Sigma_\set{X}$ be the observed covariance matrix entailed by $F_{\set{L}\set{L}},F_{\set{L}\set{X}},F_{\set{X}\set{L}},F_{\set{X}\set{X}},\Omega_{\epsilon_\set{L}},\Omega_{\epsilon_\set{X}}$. Let $Q$ be an orthogonal matrix, and
\[
\tilde{F}_{\set{L}\set{L}}= Q^T F_{\set{L}\set{L}}Q, ~\tilde{F}_{\set{L}\set{X}}= Q^T F_{\set{L}\set{X}}, ~\tilde{F}_{\set{X}\set{L}}= F_{\set{X}\set{L}}Q, ~\tilde{F}_{\set{X}\set{X}}= F_{\set{X}\set{X}}, ~\tilde{\Omega}_{\epsilon_\set{L}}= Q^T\Omega_{\epsilon_\set{L}}Q, ~\text{and}~ \tilde{\Omega}_{\epsilon_\set{X}}= \Omega_{\epsilon_\set{X}}.
\]
Then, the matrices $\tilde{F}_{\set{L}\set{L}},\tilde{F}_{\set{L}\set{X}},\tilde{F}_{\set{X}\set{L}},\tilde{F}_{\set{X}\set{X}},\tilde{\Omega}_{\epsilon_\set{L}},\tilde{\Omega}_{\epsilon_\set{X}}$ can also entail the covariance matrix $\Sigma_\set{X}$.
\end{lemma}
\begin{proof}[Proof of \cref{lemma:orthogonal_transformation}]

Let $\begin{pNiceMatrix}
F_{\set{L}\set{L}}   & F_{\set{L}\set{X}}  \\
F_{\set{X}\set{L}}   & F_{\set{X}\set{X}} 
\end{pNiceMatrix}
= \begin{pNiceMatrix}
A   & B  \\
C   & D
\end{pNiceMatrix}$
and 
$\begin{pNiceMatrix}
\tilde{F}_{\set{L}\set{L}}   & \tilde{F}_{\set{L}\set{X}}  \\
\tilde{F}_{\set{X}\set{L}}   & \tilde{F}_{\set{X}\set{X}} 
\end{pNiceMatrix}
= \begin{pNiceMatrix}
\tilde{A}   & \tilde{B}  \\
\tilde{C}   & \tilde{D}
\end{pNiceMatrix}$.
Let $M$ and $N$ be matrices defined as in \cref{proposition:covariance_matrix}, and similarly for $\tilde{M}$ and $\tilde{N}$. We then have
\begin{align*}
\tilde{M}&=\left(I-\tilde{A}-\tilde{B}(I-\tilde{D})^{-1} \tilde{C}\right)^{-1}\\
&=\left(Q^T Q-Q^T AQ-(Q^T B)(I-D)^{-1} (CQ)\right)^{-1}\\
&=Q^{-1}\left(I-A-B(I-D)^{-1} C\right)^{-1}Q^{-T}\\
&=Q^T MQ
\end{align*}
and
\begin{align*}
\tilde{N}&=\left((I-\tilde{A})\tilde{C}^{-1}(I-\tilde{D})-\tilde{B}\right)^{-1}\\
&=\left((Q^T Q-Q^T AQ)(CQ)^{-1}(I-D)-Q^T B\right)^{-1}\\
&=\left((I-A)C^{-1}(I-D)-B\right)^{-1}Q^{-T}\\
&=NQ.
\end{align*}
By \cref{proposition:covariance_matrix}, the latent covariance matrix $\tilde{\Sigma}_\set{L}$ is given by
\begin{align*}
\tilde{\Sigma}_{\set{L}}&=\tilde{M}^{T} \tilde{\Omega}_{\epsilon_\set{L}} \tilde{M} + \tilde{N}^T\tilde{\Omega}_{\epsilon_\set{X}} \tilde{N}\\
&=(Q^TM^TQ^{-T})(Q^T\Omega_{\epsilon_\set{L}}Q)(Q^{-1}MQ)+Q^TN^T \Omega_{\epsilon_\set{X}} NQ\\
&=Q^T(M^{T}\Omega_{\epsilon_\set{L}} M + N^T\Omega_{\epsilon_\set{X}} N)Q\\
&=Q^T\Sigma_{\set{L}} Q.
\end{align*}
By \cref{proposition:covariance_matrix}, the observed covariance matrix $\tilde{\Sigma}_\set{X}$ is given by
\begin{flalign*}
\tilde{\Sigma}_\set{X}&=(I-\tilde{D})^{-T}\Big(\tilde{B}^{T} \tilde{\Sigma}_{\set{L}} B+\tilde{\Omega}_{\epsilon_\set{X}}+\tilde{\Omega}_{\epsilon_\set{X}} \tilde{N} \tilde{B}+\tilde{B}^{T} \tilde{N}^T \tilde{\Omega}_{\epsilon_\set{X}}\Big)(I-\tilde{D})^{-1}\\
&=(I-D)^{-T}\Big((Q^T B)^{T} (Q^T \Sigma_{\set{L}} Q) (Q^T B)+\Omega_{\epsilon_\set{X}}+\Omega_{\epsilon_\set{X}} (NQ) (Q^T B)+(Q^T B)^{T} (NQ)^T \Omega_{\epsilon_\set{X}}\Big)(I-D)^{-1}\\
& =(I-D)^{-T}\Big(B^{T} \Sigma_{\set{L}} B+\Omega_{\epsilon_\set{X}}+\Omega_{\epsilon_\set{X}} N B+B^{T} N^T \Omega_{\epsilon_\set{X}}\Big)(I-D)^{-1}\\
&=\Sigma_\set{X}.
\end{flalign*}
This indicates that the matrices $\tilde{A},\tilde{B},\tilde{C},\tilde{D},\tilde{\Omega}_{\epsilon_\set{L}}$ and $\tilde{\Omega}_{\epsilon_\set{X}}$ can also entail the covariance matrix $\Sigma_\set{X}$.
\end{proof}
Using \cref{lemma:orthogonal_transformation}, we can prove \cref{thm:orthogonal_indeterminacy}.
\TheoremOrthogonalIndeterminacy*
\begin{proof}[Proof of \cref{thm:orthogonal_indeterminacy}]
Let $\set{S}_1$, $\set{S}_2$, $\set{S}_3$, $\set{S}_4$, and $\set{S}_5$ be the indices of $\set{L}$, their latent parents, their latent children, their measured parents, and their measured children in $\graph$, respectively. Let $Q$ be a $|\set{L}|\times |\set{L}|$ orthogonal matrix. 
Let $\begin{pNiceMatrix}
F_{\set{L}\set{L}}   & F_{\set{L}\set{X}}  \\
F_{\set{X}\set{L}}   & F_{\set{X}\set{X}} 
\end{pNiceMatrix}
= \begin{pNiceMatrix}
A   & B  \\
C   & D
\end{pNiceMatrix}$
and 
$\begin{pNiceMatrix}
\tilde{F}_{\set{L}\set{L}}   & \tilde{F}_{\set{L}\set{X}}  \\
\tilde{F}_{\set{X}\set{L}}   & \tilde{F}_{\set{X}\set{X}} 
\end{pNiceMatrix}
= \begin{pNiceMatrix}
\tilde{A}   & \tilde{B}  \\
\tilde{C}   & \tilde{D}
\end{pNiceMatrix}$.
For matrices $A,B,C$, and $D$ from matrix $F$, suppose that we replace $A_{\set{S}_2,{\set{S}}_1}$, $A_{\set{S}_1,{\set{S}}_3}$, $C_{\set{S}_4, \set{S}_1}$, and $B_{\set{S}_1, \set{S}_5}$ with $A_{\set{S}_2,{\set{S}}_1}Q$, $Q^T A_{\set{S}_1,{\set{S}}_3}$, $C_{\set{S}_4, \set{S}_1}Q$, and $Q^T B_{\set{S}_1, \set{S}_5}$, respectively. Then, we will show that the entailed covariance matrix $\Sigma_{\set{X}}$ is unchanged.

Let $U$ be an $m\times m$ orthogonal matrix such that: (i) $U_{\set{S}_1,\set{S}_1}=Q$, (ii) the
remaining diagonal entries are ones, and (iii) the remaining non-diagonal entries are zeros. Let
\[
\tilde{A}\coloneqq U^T AU, \quad\tilde{B}\coloneqq U^T B, \quad\tilde{C}\coloneqq CU, \quad\tilde{D}\coloneqq D, \quad\tilde{\Omega}_{\epsilon_\set{L}}\coloneqq U^T\Omega_{\epsilon_\set{L}}U=I, \quad\text{and}\quad \tilde{\Omega}_{\epsilon_\set{X}}\coloneqq \Omega_{\epsilon_\set{X}}.
\]
By \cref{lemma:orthogonal_transformation}, the matrices above can entail the same covariance matrix $\Sigma_{\set{X}}$.

By construction of $U$, left multiplication of $U^T$ on $B$ only affects $B_{\set{S}_1,*}$; specifically, it is equivalent to replacing $B_{\set{S}_1,*}$ with $Q^T B_{\set{S}_1,*}$. Furthermore, only the columns of $\set{S}_5$ in $B_{\set{S}_1,*}$ will be affected, because those columns correspond to the measured children of $\set{L}$. Therefore, all entries of $\tilde{B}$ are the same as $B$, except that $B_{\set{S}_1,\set{S}_5}$ is replaced with $Q^T B_{\set{S}_1,\set{S}_5}$. Similar reasoning shows that all entries of $\tilde{C}$ are the same as $C$, except that $C_{\set{S}_4,\set{S}_1}$ is replaced with $C_{\set{S}_4,\set{S}_1}Q$.

Now consider $U^T AU$. By the reasoning above, left multiplication of $U^T$ on $A$ only is equivalent to replacing $A_{\set{S}_1,\set{S}_3}$ with $Q^T A_{\set{S}_1,\set{S}_3}$. Further right multiplication of $U$ on $U^T A$ is equivalent to replacing $(U^T A)_{\set{S}_2,\set{S}_1}$ with $(U^T A)_{\set{S}_2,\set{S}_1}Q$. Since $\set{S}_1$, $\set{S}_2$, and $\set{S}_3$ are mutually disjoint, all entries of $\tilde{A}=U^T AU$ are the same as $A$, except that $A_{\set{S}_2,{\set{S}}_1}$ and $A_{\set{S}_1,{\set{S}}_3}$ are replaced with $A_{\set{S}_2,{\set{S}}_1}Q$ and $Q^T A_{\set{S}_1,{\set{S}}_3}$, respectively.

Hence, for matrices $A,B,C$, and $D$, suppose we replace $A_{\set{S}_2,{\set{S}}_1}$, $A_{\set{S}_1,{\set{S}}_3}$, $C_{\set{S}_4, \set{S}_1}$, and $B_{\set{S}_1, \set{S}_5}$ with $A_{\set{S}_2,{\set{S}}_1}Q$, $Q^T A_{\set{S}_1,{\set{S}}_3}$, $C_{\set{S}_4, \set{S}_1}Q$, and $Q^T B_{\set{S}_1, \set{S}_5}$, respectively. By the reasoning above, this is equivalent to replacing $A$, $B$, $C$, and $D$ with $\tilde{A}$, $\tilde{B}$, $\tilde{C}$, and $\tilde{D}$, respectively, which (generically) share the same support and entail the same covariance matrix $\Sigma_\set{X}$. 
\end{proof}

\subsection{Proof of \cref{cor:orthogonal_indeterminacy_basic} and \cref{cor:orthogonal_indeterminacy}}

\CorollaryOrthogonalIndeterminacyBasic*
\begin{proof}[Proof of \cref{cor:orthogonal_indeterminacy_basic}]
Proof by contradiction.
If it is not the case that every pair of latent variables has to have at least one different parent or child,
then there exist $\set{L}$
such that $|\set{L}|\geq 2$ 
and $\set{L}$ share the same parents and children.
Therefore by \cref{thm:orthogonal_indeterminacy} there must exist orthogonal transformation indeterminacy regarding $F$,
and thus the parameters are not identifiable.
\end{proof}

\CorollaryOrthogonalIndeterminacy*

\begin{proof}[Proof of \cref{cor:orthogonal_indeterminacy}]
Proof by contradiction.
If for a DAG in the equivalence class, there is an atomic cover that has more than one latent variable, then according to the definition of the concerned equivalence class,
the latent variables in that atomic cover share the same parents and children.
Then by \cref{thm:orthogonal_indeterminacy} there must exist orthogonal transformation indeterminacy regarding $F$,
and thus the parameters are not identifiable.
\end{proof}

\subsection{Proof of \cref{proposition:covariance_matrix}}\label{sec:proof_covariance_matrix}
\PropositionCovarianceMatrix*


\begin{proof}[Proof of \cref{proposition:covariance_matrix}]
Let $F = \begin{pNiceMatrix}
F_{\set{L}\set{L}}   & F_{\set{L}\set{X}}  \\
F_{\set{X}\set{L}}   & F_{\set{X}\set{X}} 
\end{pNiceMatrix}
= \begin{pNiceMatrix}
A   & B  \\
C   & D
\end{pNiceMatrix}$.

Since matrices $A$ and $D$ are invertible, using the formula of $2\times 2$ block matrix inversion \citep[Chapter~0.7]{horn2012matrix}, we obtain
\[
(I-F)^{-1}=\begin{pmatrix}
M & -M B(I-D)^{-1} \\
-(I-D)^{-1} C M & (I-D)^{-1}+(I-D)^{-1} C M B(I-D)^{-1}
\end{pmatrix},
\]
which implies
\[
(I-F)^{-T}=\begin{pmatrix}
M^{T} & -M^{T} C^{T}(I-D)^{-T} \\
-(I-D)^{-T} B^{T} M^{T} & (I-D)^{-T}+(I-D)^{-T} B^{T} M^{T} C^{T}(I-D)^{-T}
\end{pmatrix}
\]
and
\[
(I-F)^{-T} \Omega=\begin{pmatrix}
M^{T} \Omega_{\epsilon_\set{L}} & -M^{T} C^{T}(I-D)^{-T} \Omega_{\epsilon_\set{X}} \\
-(I-D)^{-T} B^{T} M^{T} \Omega_{\epsilon_\set{L}} & (I-D)^{-T} \Omega_{\epsilon_\set{X}}+(I-D)^{-T} B^{T} M^{T} C^{T}(I-D)^{-T} \Omega_{\epsilon_\set{X}}
\end{pmatrix}.
\]
We then have
\begin{flalign*}
\Sigma_{L}&=M^{T} \Omega_{\epsilon_\set{L}} M+M^{T} C^{T}(I-D)^{-T} \Omega_{\epsilon_\set{X}}(I-D)^{-1} C M\\
&= M^{T} \Omega_{\epsilon_\set{L}} M + N^T\Omega_{\epsilon_\set{X}} N
\end{flalign*}
and
\begin{flalign*}
\Sigma_{X}&=(I-D)^{-T} B^{T} M^{T} \Omega_{\epsilon_\set{L}} M B(I-D)^{-1} +(I-D)^{-T} \Omega_{\epsilon_\set{X}}(I-D)^{-1} \\
& \qquad +(I-D)^{-1} \Omega_{\epsilon_\set{X}}(I-D)^{-1} C M B(I-D)^{-1} +(I-D)^{-T} B^{T} M^{T} C^{T}(I-D)^{-T} \Omega_{\epsilon_\set{X}}(I-D)^{-1} \\
& \qquad +(I-D)^{-T} B^{T} M^{T} C^{T}(I-D)^{-T} \Omega_{\epsilon_\set{X}}(I-D)^{-1} C M B(I-D)^{-1} \\
& =(I-D)^{-T}\Big(\Omega_{\epsilon_\set{X}}+B^{T} M^{T} \Omega_{\epsilon_\set{L}} M B+\Omega_{\epsilon_\set{X}}(I-D)^{-1} C M B+B^{T} M^{T} C^{T}(I-D)^{-T} \Omega_{\epsilon_\set{X}} \\
& \qquad\qquad\qquad\qquad +B^{T} M^{T} C^{T}(I-D)^{-T} \Omega_{\epsilon_\set{X}}(I-D)^{-1} C M B\Big)(I-D)^{-1}\\
& =(I-D)^{-T}\Big(\Omega_{\epsilon_\set{X}}+B^{T} \Sigma_{\set{L}} B+\Omega_{\epsilon_\set{X}} N B+B^{T} N^T \Omega_{\epsilon_\set{X}}\Big)(I-D)^{-1}.
\end{flalign*}
\end{proof}

We now discuss how $M$ and $N$ defined in \cref{proposition:covariance_matrix} are invertible. Note that matrices $I-D$ and $I-F$ are invertible because structure $\mathcal{G}$ is acyclic. This implies $\det(I-F)\neq 0$ and $\det(I-D)\neq 0$. Define
$$
U=\begin{pmatrix}
I & 0 \\\\
-(I-D)^{-1}C & I
\end{pmatrix},
$$
which implies
$$
(I-F)U=\begin{pmatrix}
M & B \\\\
0 & I-D
\end{pmatrix}
$$
and thus
$$
\det((I-F)U)=\det(M)\det(I-D).
$$
Since $\det(U)=1$ and $\det(I-F)\neq 0$, we have
$$
\det((I-F)U)=\det(I-F)\det(U)\neq 0.
$$
By the statement above and  $\det(I-D)\neq 0$, we have
$$
\det(M)=\frac{\det((I-F)U)}{\det(I-D)}\neq 0,
$$
which implies that $M$ is invertible. Similar reasoning can be used to show that $N$ is invertible.

\subsection{Computational Cost of Checking Whether the Conditions in Theorem~\ref{thm:identifiability} Hold}
\label{appendix: check condition1}
Here we want to investigate,
given a structure, can we efficiently check whether the proposed sufficient conditions hold? To this end, we generate random graphs and each graph has 100 variables. According to our empirical result, such a check can be done very efficiently.
Specifically, on average, given a structure with 100 variables, it only takes our Python code around 3 seconds to check whether the conditions hold.

\subsection{In Practice, What If the Conditions Do not Hold?}
\label{appendix: check condition2}
Our condition is useful in solving real-life problems. For example, in the psychometric study, we can properly design the questions with domain knowledge following the condition in Theorem~\ref{thm:identifiability} such that each single latent variable has enough observed variables as pure children and thus it can be ensured that all parameters are identifiable (as illustrated in our real-life data result in Figure~\ref{fig:bigfive_params}).

On the other hand, even though sometimes the questionnaires and data were designed not so well such that the conditions are not satisfied for the identification of parameters, our Theorem \ref{thm:identifiability} is still useful. In this case, we can still make use of our conditions to check the given structure, and find some local sub-structures where our conditions are satisfied.
Consequently, it can be ensured that all the parameters of some sub-structures are identifiable, and we can employ our estimation method to find all the edge coefficients of these sub-structures. 

\section{Additional Definitions, Graphs, Results, and Examples}

\subsection{Example of Treks}
\label{appendix: example of trek}

\begin{example} [Example of Treks]
    \label{example:treks}
    In Figure~\ref{fig:example_parameterization} (a),
    there are four treks between $\node{X_4}$ and $\node{X_5}$:
    (i) $\node{X_4}\leftarrow\node{L_1}\rightarrow\node{X_5}$,
    (ii) $\node{X_4}\leftarrow\node{X_3}\rightarrow\node{X_5}$,
    (iii) $\node{X_4}\leftarrow\node{L_1}\leftarrow\node{X_2}\rightarrow\node{X_3}\rightarrow\node{X_5}$,
    and 
    (iv) $\node{X_4}\leftarrow\node{X_3}\leftarrow\node{X_2}\rightarrow\node{L_1}\rightarrow\node{X_5}$, illustrated in Figure~\ref{fig:example_parameterization} (b).
    \vspace{-0.5em}
    \end{example}

\subsection{Definition of Pure Children}
\label{appendix: purechildren}

\begin{definition} [Parents, Children, and Descendants  of a Set of Nodes \citep{dong2023versatile}]
  For a set of nodes $\set{X}$ in $\graph$,
  we have $\children(\set{X})=\cup_{\node{X} \in \set{X}} \children(\node{X})$,  $\parents(\set{X})=\cup_{\node{X} \in \set{X}} \parents(\node{X})$, and $\descendant(\set{X})=\cup_{\node{X} \in \set{X}} \descendant(\node{X})$. 
\label{definition:pa_ch_de_of_set}
\vspace{-0mm}
\end{definition}

\begin{definition} [Pure Children of a Set of Nodes \citep{dong2023versatile}]
$\set{Y}$ are pure children of a set of nodes $\set{X}$ in graph $\graph$, i.e., $\set{Y}\in\purechildren(\set{X})$, 
iff all of the following hold: (i) $\set{X} \cap \set{Y}=\emptyset$, (ii) $\set{Y}\subseteq \children(\set{X})$,
  (iii) $\parents(\set{Y}) =  \set{X}$,
   and (iv) $\descendant(\set{Y})\cap \set{X}=\emptyset$. 
\label{definition:pch}
\vspace{-1mm}
\end{definition}

\subsection{Definition of Neighbor and MEC}
\label{appendix: additional def}

\begin{definition} [Neighbor]
In $\graph$, nodes $\node{X}$ and  $\node{Y}$ are neighbor of each other iff there exist an edge from $\node{X}$ to $\node{Y}$ or an edge from $\node{Y}$ to $\node{X}$.
\end{definition}

\begin{definition} [Markov Equivalence Class (MEC)]
Two DAGS $\graph_1$ and $\graph_2$ belong to the same MEC,
iif they share the same skeleton and v-structures.
\end{definition}

\subsection{Definition of Rank-invariant Graph Operator}
\label{appendix: graph operator}

The definitions are as follows with examples.

  \begin{definition}[Skeleton Operator \citep{dong2023versatile}]
    \label{def:skeleton operator}
    Given an atomic cover $\set{V}$ in a graph $\graph$, 
    and let $\setset{S}$ be the set of all atomic covers in $\graph$ such that for all $\set{S}\in\setset{S}$, $\set{S}\subset\set{V}$.
    For all $\node{V_1}\in\set{V}$ and 
    all $\node{V_2}\in \purechildren(\set{V})\backslash(\bigcup_{\set{S}\in\setset{S}}\purechildren(\set{S}))$,
    if $\node{V_1}$ and $\node{V_2}$ are not adjacent, draw an edge from $\node{V_1}$ to $\node{V_2}$.
     We denote such an operator as skeleton operator $\mathcal{O}_s(\graph)$.
  \end{definition}

  \begin{definition}[Intra atomic operator]
    \label{def:intra atomic operator}
    For every atomic cover $\set{V}$ in structure $\graph$, if $|\set{V}|\geq 2$, we add edges between elements in $\set{V}$ such that
      edges among $\set{V}$ form a fully connected DAG. 
      We denote such an operator as intra atomic operator $\mathcal{O}_{\text{a}}(\graph)$.
  \end{definition}

  \begin{example}
    Suppose the original graph is in Figure~\ref{fig:example for operator} (a).
    After the skeleton operator, we have  $\mathcal{O}_s(\graph)$, which is shown in Figure~\ref{fig:example for operator} (b).
    After the intro atomic operator, we have  $\mathcal{O}_a(\mathcal{O}_s(\graph))$, which is shown in Figure~\ref{fig:example for operator} (c).
    \end{example}

 \begin{figure}[t]
  \centering 
 \centering 
 \begin{subfigure}[t]{0.23\textwidth}
   \centering
   \includegraphics[height=30mm]{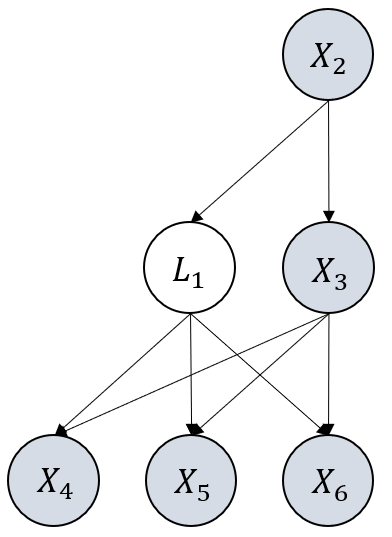}
   \caption{ \small Graph $\graph$ to show treks.}
 \end{subfigure}
 \begin{subfigure}[t]{0.69\textwidth}
    \centering 
   \includegraphics[height=30mm]{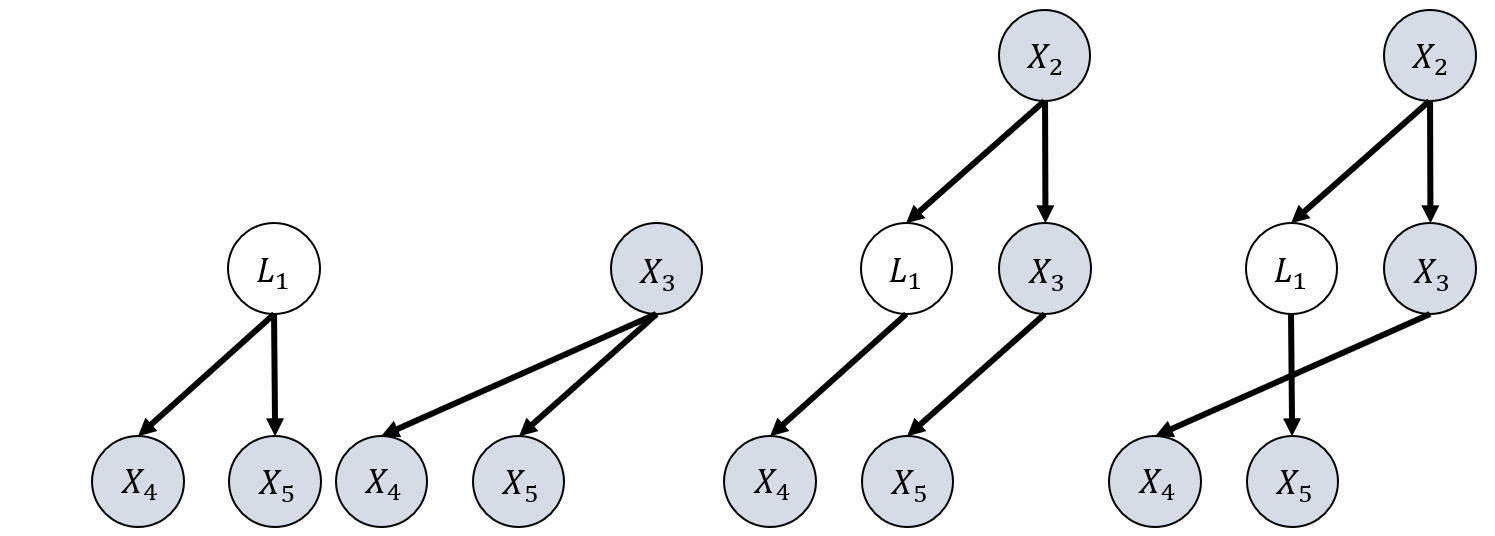}
   \caption{ \small The four simple treks between $\node{X_4}$ and $\node{X_5}$ in (a).}
 \end{subfigure}
 \vspace{-0.5em}
  \caption{Illustrative figure to show how to parameterize $\Sigma_{\set{X}_\graph}$ by the use of simple trek rule.} \label{fig:example_parameterization}
  \vspace{-1em}
\end{figure}

\begin{figure}[t]
  \centering 
  \includegraphics[width=0.55\linewidth]{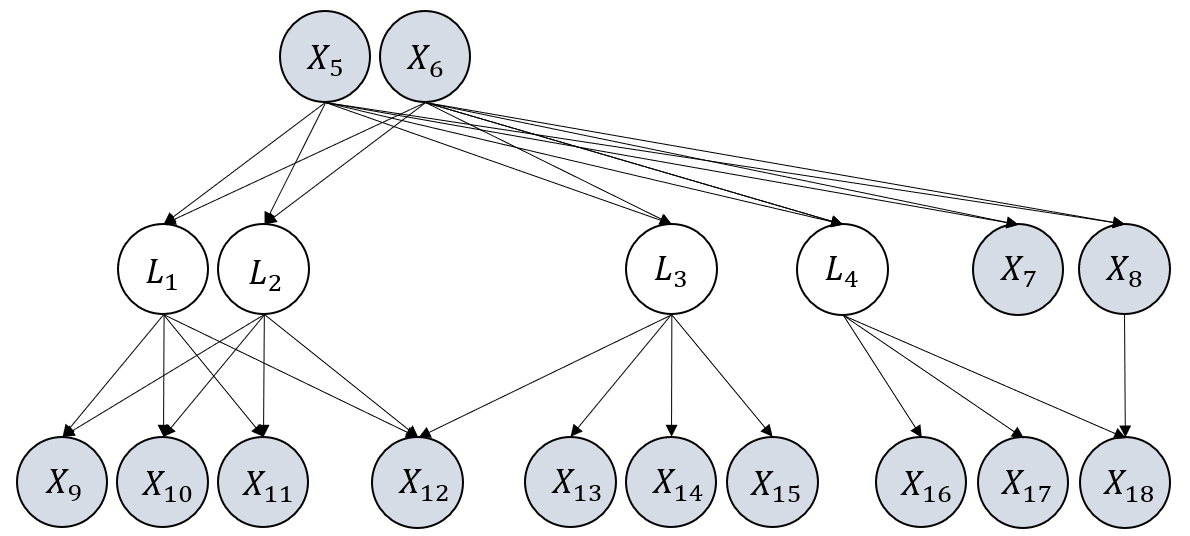}
  \caption{An illustrative graph to show orthogonal transformation indeterminacy.
   An atomic cover of it, $\{\node{L_1},\node{L_2}\}$, has more than one latent variable, 
   and thus there exists orthogonal transformation indeterminacy regarding coefficients of edges that involve $\{\node{L_1},\node{L_2}\}$.}  \label{fig:example2}
  \vspace{-1em}
\end{figure}

\begin{figure}[t]
  \vspace{-0mm}
  \centering
\centering
\subfloat[Graph $\mathcal{G}$.]{
\includegraphics[width=0.31\textwidth]{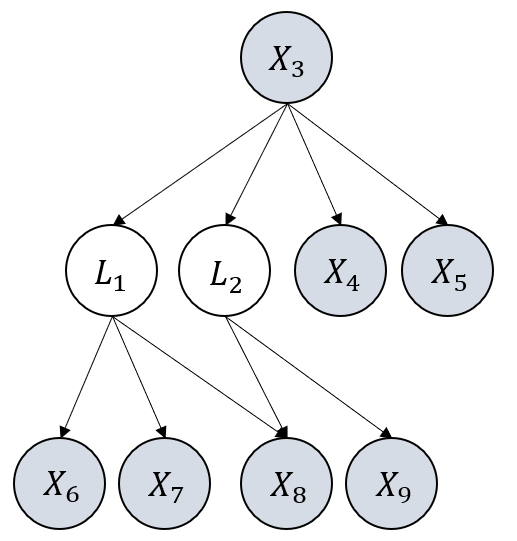}
}
\centering
\subfloat[$\mathcal{O}_{\text{s}}(\mathcal{G})$.]{
    \includegraphics[width=0.31\textwidth]{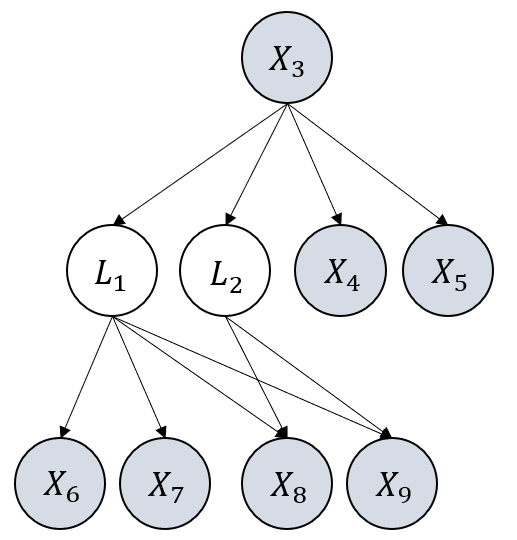}
}\hspace{0em}
\subfloat[$\mathcal{O}_{\text{a}}(\mathcal{O}_{\text{s}}(\mathcal{G}))$.]{
    \includegraphics[width=0.31\textwidth]{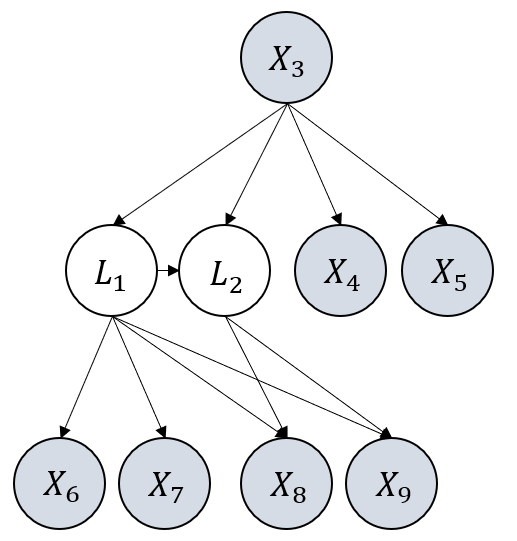}
}\hspace{0em}
  \caption{Example to illustrate graph operators $\mathcal{O}_{\text{a}}$ and $\mathcal{O}_{\text{s}}$.}
  \label{fig:example for operator}
\end{figure}

\begin{center}
\begin{table}[t]
\vspace{-0mm}
  \caption{Performance under violation of normality using uniform noise terms in MSE (mean (std)).}
   \label{tab:result3}
  \footnotesize
  \center 
  \begin{center}
  \begin{tabular}{|c|c|c|c|}
    \hline  \multicolumn{2}{|c|}{Metric} &\multicolumn{2}{|c|}{MSE up to group sign }\\
    \hline 
     \multicolumn{2}{|c|}{Method} & Estimator & Estimator-TR \\
    \hline
    & 2k 
    & 0.0017 & 0.0005\\
    \cline{2-4}
    {{GS Case}}
    &5k 
    &  0.0018  & 0.0004 \\
    \cline{2-4}
    &10k
    &  0.0018 & 0.0003 \\
    \hline 
  \end{tabular}
  \end{center}
  \hfill
\end{table}
\end{center}

\begin{center}
\begin{table}[t]
\vspace{-0mm}
  \caption{Performance under violation of linearity using leaky relu in MSE (mean (std)).}
   \label{tab:result4}
  \footnotesize
  \center 
  \begin{center}
  \begin{tabular}{|c|c|c|c|}
    \hline  \multicolumn{2}{|c|}{Metric} &\multicolumn{2}{|c|}{MSE up to group sign}\\
    \hline 
     \multicolumn{2}{|c|}{Method} & Estimator & Estimator-TR \\
    \hline
    & $\alpha=0.8$ (close to linear)
    & 0.004 & 0.001\\
    \cline{2-4}
    {{GS Case with 10k sample size}}
    & $\alpha=0.6$ (quite nonlinear)
    & 0.013 & 0.005 \\
    \cline{2-4}
    & $\alpha=0.3$ (very nonlinear)
    &  0.046 & 0.027 \\
    \hline 
  \end{tabular}
  \end{center}
  \hfill
\end{table}
\end{center}

\subsection{Graphs for Synthetic Data Experiments}
Please refer to Figures~\ref{fig:GSCases} and \ref{fig:OTCases}. 

\subsection{Additional Result under Model Misspecification}
Please refer to Tables~\ref{tab:result3}and \ref{tab:result4}.

\begin{figure}[t]
    \vspace{-0mm}
    \begin{subfigure}[b]{0.4\textwidth}
      \includegraphics[width=\textwidth]{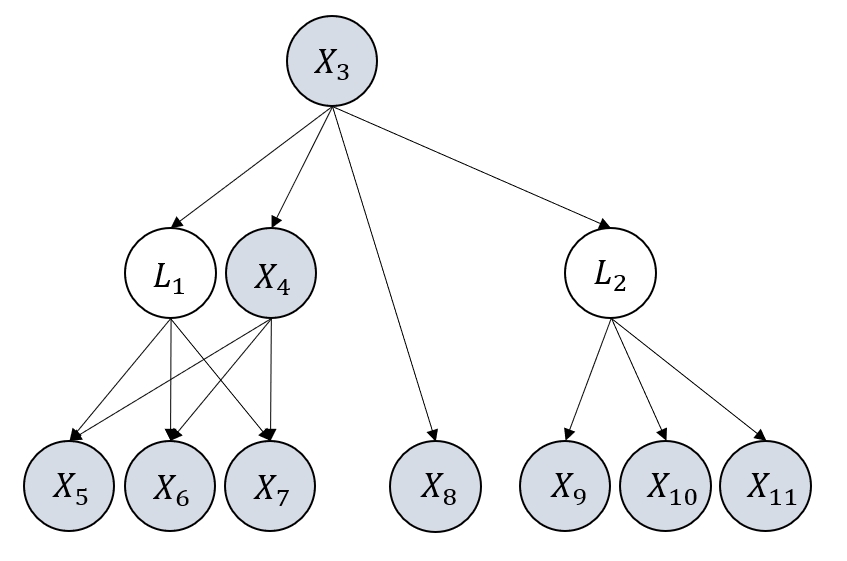}
      \caption{Illustrative graph $\graph_1$.}
    \end{subfigure}
    \hfill
    \begin{subfigure}[b]{0.58\textwidth}
    \includegraphics[width=\textwidth]{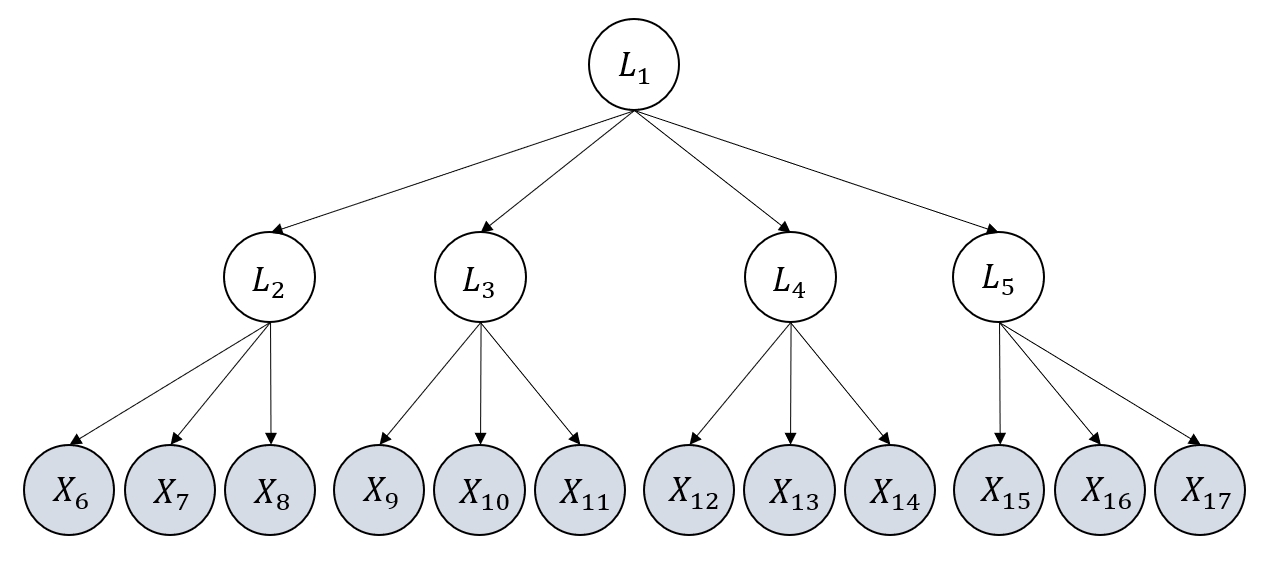}
    \caption{Illustrative graph $\graph_2$}
    \end{subfigure}
    \hfill
    \begin{subfigure}[b]{0.4\textwidth}
    \centering
    \includegraphics[width=\textwidth]{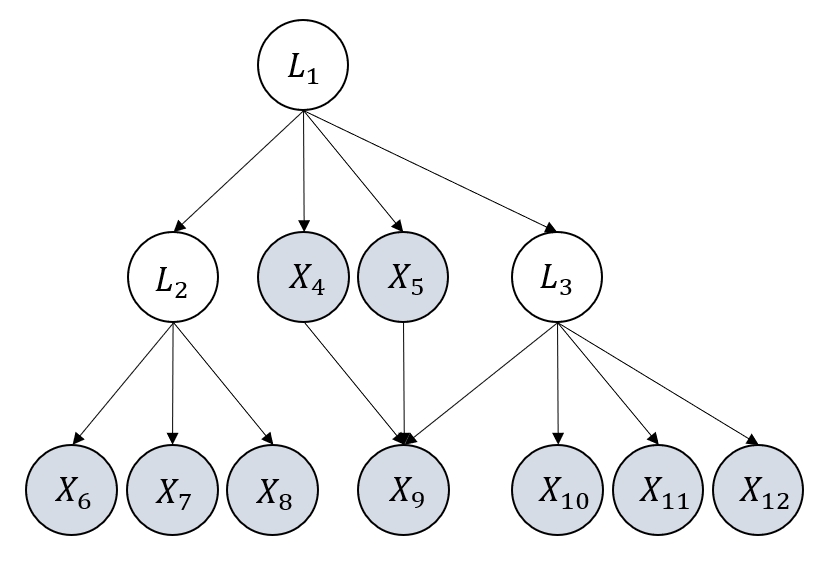}
    \caption{Illustrative graph $\graph_3$.}
    \end{subfigure}
    \hfill
    \begin{subfigure}[b]{0.4\textwidth}
    \includegraphics[width=\textwidth]{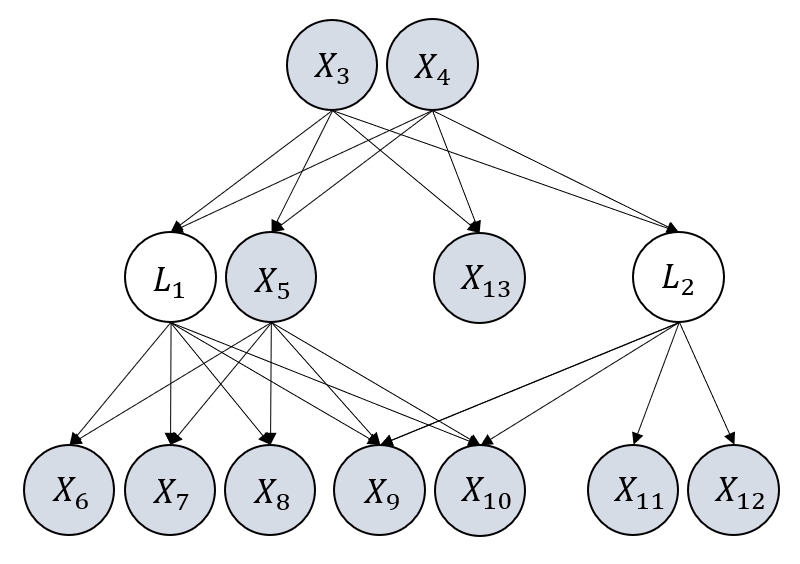}
    \caption{Illustrative graph $\graph_4$}
    \end{subfigure}
     \hfill
    \begin{subfigure}[b]{0.5\textwidth}
    \includegraphics[width=\textwidth]{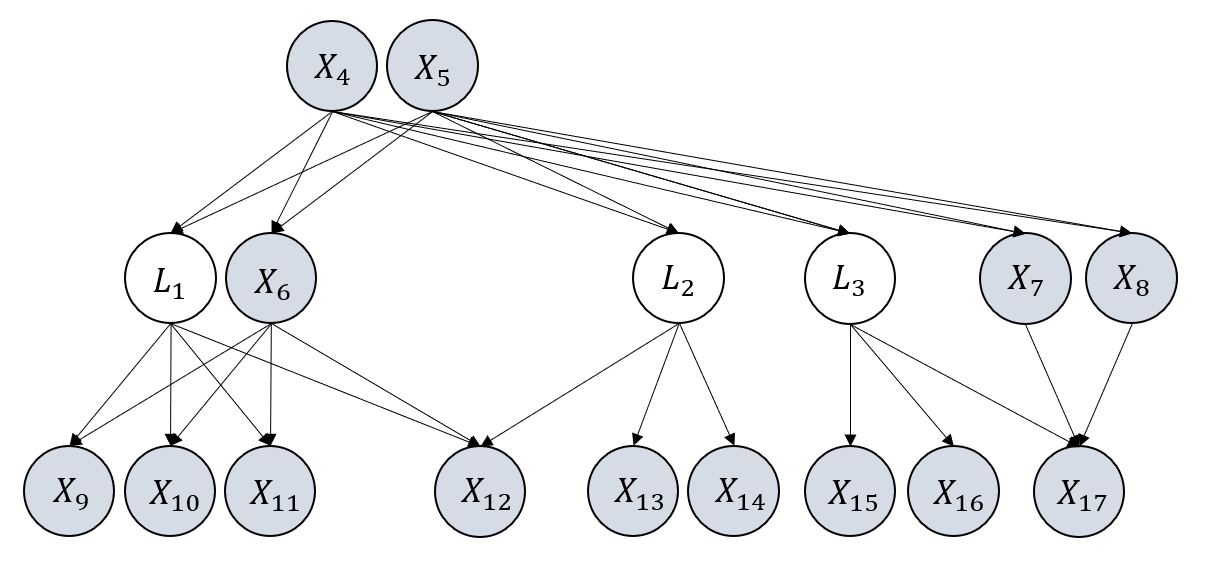}
    \caption{Illustrative graph $\graph_5$}
    \end{subfigure}
  \caption{Examples of graphs in the GS case. The parameters of them are identifiable up to group sign indeterminacy.}
    \label{fig:GSCases}
    \vspace{-0mm}
  \end{figure}

\begin{figure}[t]
    \vspace{-0mm}
    \begin{subfigure}[b]{0.35\textwidth}
      \includegraphics[width=\textwidth]{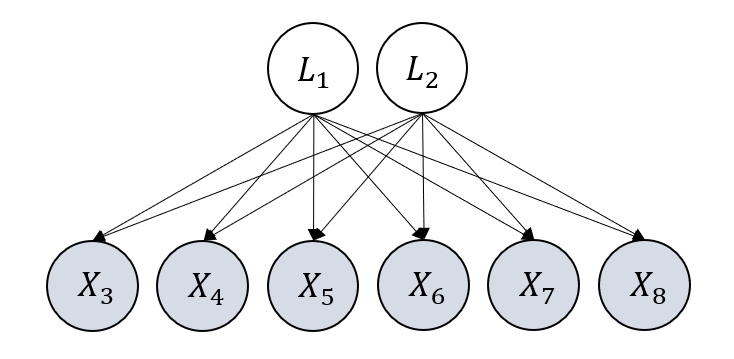}
      \caption{Illustrative graph $\graph_6$.}
    \end{subfigure}
    \hfill
    \begin{subfigure}[b]{0.35\textwidth}
    \includegraphics[width=\textwidth]{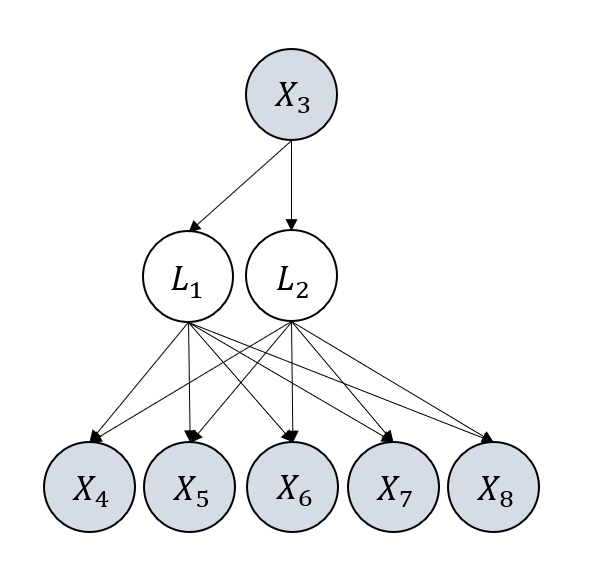}
    \caption{Illustrative graph $\graph_7$}
    \end{subfigure}
    \hfill
    \begin{subfigure}[b]{0.4\textwidth}
    \centering
    \includegraphics[width=\textwidth]{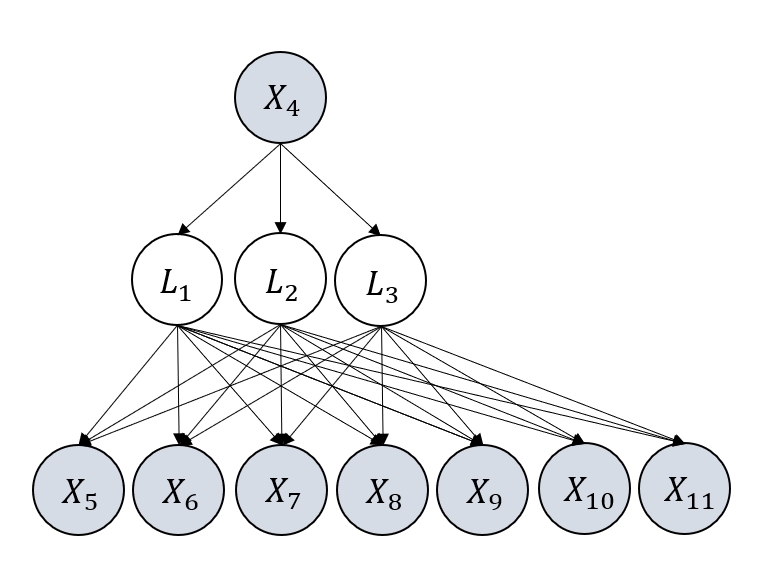}
    \caption{Illustrative graph $\graph_8$.}
    \end{subfigure}
    \hfill
    \begin{subfigure}[b]{0.6\textwidth}
    \includegraphics[width=\textwidth]{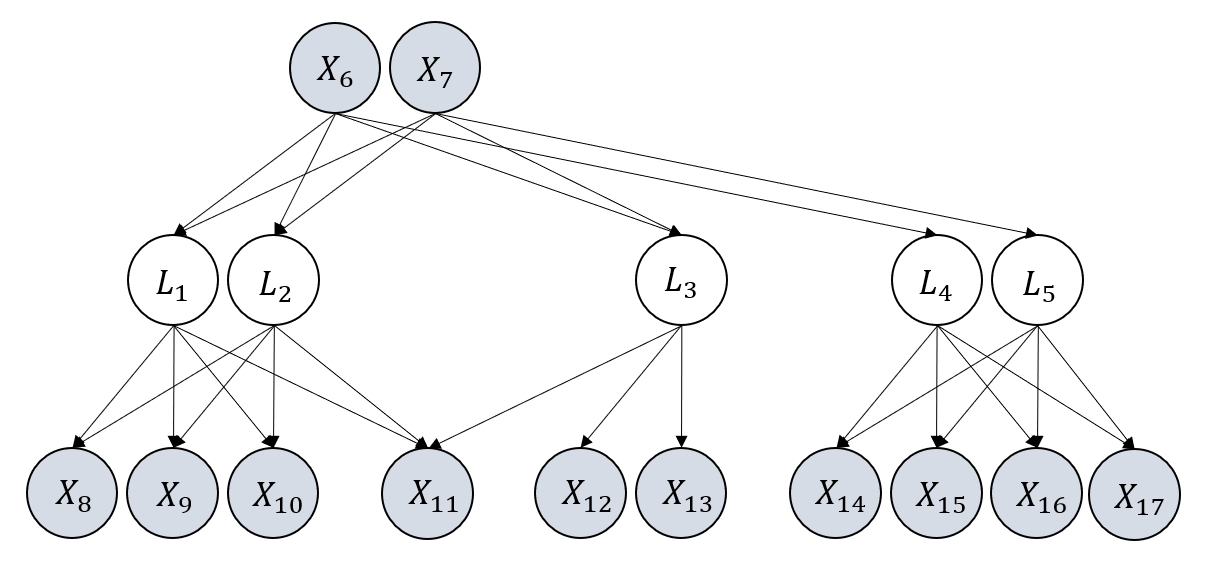}
    \caption{Illustrative graph $\graph_9$}
    \end{subfigure}
     \hfill
    \begin{subfigure}[b]{0.6\textwidth}
    \includegraphics[width=\textwidth]{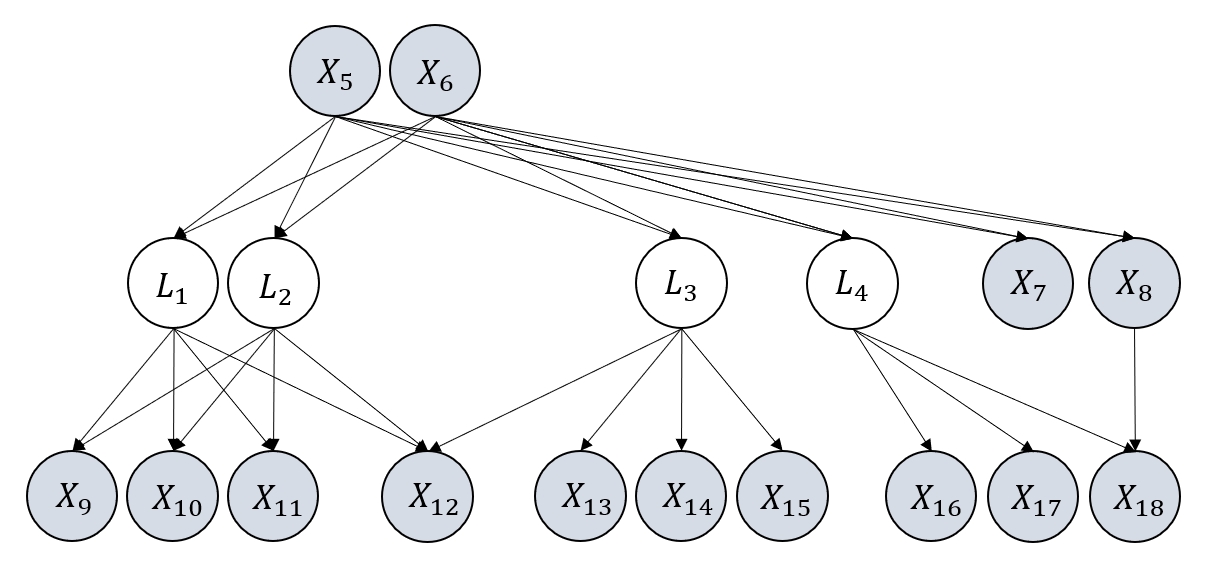}
    \caption{Illustrative graph $\graph_{10}$}
    \end{subfigure}
  \caption{Examples of graphs in the OT case. Parameters of them are Identifiable up to group sign and orthogonal transformation indeterminacy.}
    \label{fig:OTCases}
    \vspace{-0mm}
  \end{figure}

\section{Other Discussions}
Our optimization problem in Eq.~\eqref{eq:objective} is solved by gradient descent using PyTorch.
Our current implementation is based on CPU but it can  be further accelerated by using GPU. A very related discussion can also be found in \citep{ng2020role}. 

The optimization problem in Eq.~\eqref{eq:objective_tr} is solved by gradient descent,
which involves evaluating the LogDet and matrix inverse (for the gradient) terms (which is similar to continuous causal discovery methods based on Gaussian likelihood \citep{ng2020role}).
According to \citep{toledo1997locality}, the computational complexity is $O(td^3)$, where $d$ is the number of variables and $t$ is the number of iterations of gradient descent respectively. Note that the computational cost is
largely independent of the sample size as we only need to calculate the sample covariance once and save it for further use.

 It is possible to perform inference  on the learned parameters in our framework. To be specific, as we use maximum likelihood estimation for the parameters, some standard techniques can be readily used. For example, bootstrap can be employed to provide standard errors on linear coefficients and Chi-square test can also be done to examine the fitness of the model. 
 
\section{Extended Related Work}
\label{appendix: related work}
One main line of research in latent variable estimation centers on factor-analysis-based methods. 
Representative studies include \cite{Reiersol50, shapiro1985identifiability,Brien94,Wegge96,bekker1997generic, Williams20}. Various other techniques have also been employed for latent structure and parameter identification, including over-complete ICA-based techniques~\citep{hoyer2008estimation,salehkaleybar2020learning, adams2021identification} that leverage non-Gaussianity and matrix decomposition-based approaches~\citep{anandkumar2013learning}. However, these approaches typically consider latent variables with observed children, without considering parameter identification in latent hierarchical structures. A more related work is \cite{ankan2023combining}, but it considers a much simpler structure. 

Another direction would be to project the graph to an ADMG and the latent confounding effects are encoded by correlated noise terms. Following this idea, graphical criteria such as half-trek \citep{foygel2012half,foygel2022half}, G-criterion \citep{brito2012generalized}, 
and some further developments \citep{tian2009parameter,kumor2020efficient,richardson2002ancestral,brito2002new,drton2011global} has been proposed.
Furthermore, another line of works involve studies on causal effect estimation in the presence of latent confounders \citep{tian2002general,bareinboim2015recovering,kuroki2014measurement, miao2018identifying,jung2020learning}, which often rely on instrumental variables or proxy variables for identification. Notice that in this task, the parameters may not be identified \citep{kuroki2014measurement}, although the causal effect from the treatment variable to the outcome variable can be identified. 

Furthermore, several existing works also solve an optimization problem that involves parameterization of maximum likelihood, such as those in continuous optimization for causal discovery \citep{ng2020role,zheng2020learning,lachapelle2020grandag,brouillard2020differentiable,ng2024scorebased} and parameter estimation of Lyapunov models \citep{varando2020graphical,dettling2024lasso}. Differently, our formulation involving likelihood parameterization aims to estimate parameters of partially observed linear causal models.

\section{Limitations}
\label{appendix: limitation}
One limitation of this work is that our theoretical results are based on the assumption of linear gaussian causal models. 
When data is not linear gaussian, we have also conducted experiments to see the performance of our method.
It turns out that our method still performs well in the presence of certain extents of violation of normality and linearity. 
However, theoretical analysis under violation of linearity and normality would be interesting and the focus of future work.

\section{Broader Impacts}
\label{appendix: broader impacts}
The goal of this paper is to advance the field of machine learning. We do not see any potential negative societal impacts of the work.
\clearpage
\newpage
\section*{NeurIPS Paper Checklist}

\begin{enumerate}

\item {\bf Claims}
    \item[] Question: Do the main claims made in the abstract and introduction accurately reflect the paper's contributions and scope?
    \item[] Answer: \answerYes{} 
    \item[] Justification: the claims stated in the abstract and introduction are the same as what are stated in the theory and method part.
    \item[] Guidelines:
    \begin{itemize}
        \item The answer NA means that the abstract and introduction do not include the claims made in the paper.
        \item The abstract and/or introduction should clearly state the claims made, including the contributions made in the paper and important assumptions and limitations. A No or NA answer to this question will not be perceived well by the reviewers. 
        \item The claims made should match theoretical and experimental results, and reflect how much the results can be expected to generalize to other settings. 
        \item It is fine to include aspirational goals as motivation as long as it is clear that these goals are not attained by the paper. 
    \end{itemize}

\item {\bf Limitations}
    \item[] Question: Does the paper discuss the limitations of the work performed by the authors?
    \item[] Answer: \answerYes{} 
    \item[] Justification: A discussion about the limitations can be found in Appendix~\ref{appendix: limitation}.
    \item[] Guidelines:
    \begin{itemize}
        \item The answer NA means that the paper has no limitation while the answer No means that the paper has limitations, but those are not discussed in the paper. 
        \item The authors are encouraged to create a separate "Limitations" section in their paper.
        \item The paper should point out any strong assumptions and how robust the results are to violations of these assumptions (e.g., independence assumptions, noiseless settings, model well-specification, asymptotic approximations only holding locally). The authors should reflect on how these assumptions might be violated in practice and what the implications would be.
        \item The authors should reflect on the scope of the claims made, e.g., if the approach was only tested on a few datasets or with a few runs. In general, empirical results often depend on implicit assumptions, which should be articulated.
        \item The authors should reflect on the factors that influence the performance of the approach. For example, a facial recognition algorithm may perform poorly when image resolution is low or images are taken in low lighting. Or a speech-to-text system might not be used reliably to provide closed captions for online lectures because it fails to handle technical jargon.
        \item The authors should discuss the computational efficiency of the proposed algorithms and how they scale with dataset size.
        \item If applicable, the authors should discuss possible limitations of their approach to address problems of privacy and fairness.
        \item While the authors might fear that complete honesty about limitations might be used by reviewers as grounds for rejection, a worse outcome might be that reviewers discover limitations that aren't acknowledged in the paper. The authors should use their best judgment and recognize that individual actions in favor of transparency play an important role in developing norms that preserve the integrity of the community. Reviewers will be specifically instructed to not penalize honesty concerning limitations.
    \end{itemize}

\item {\bf Theory Assumptions and Proofs}
    \item[] Question: For each theoretical result, does the paper provide the full set of assumptions and a complete (and correct) proof?
    \item[] Answer: \answerYes{} 
    \item[] Justification: All assumptions together with necessary definitions are provided in the main text and all complete proofs are provided in the appendix. 
    \item[] Guidelines:
    \begin{itemize}
        \item The answer NA means that the paper does not include theoretical results. 
        \item All the theorems, formulas, and proofs in the paper should be numbered and cross-referenced.
        \item All assumptions should be clearly stated or referenced in the statement of any theorems.
        \item The proofs can either appear in the main paper or the supplemental material, but if they appear in the supplemental material, the authors are encouraged to provide a short proof sketch to provide intuition. 
        \item Inversely, any informal proof provided in the core of the paper should be complemented by formal proofs provided in appendix or supplemental material.
        \item Theorems and Lemmas that the proof relies upon should be properly referenced. 
    \end{itemize}

    \item {\bf Experimental Result Reproducibility}
    \item[] Question: Does the paper fully disclose all the information needed to reproduce the main experimental results of the paper to the extent that it affects the main claims and/or conclusions of the paper (regardless of whether the code and data are provided or not)?
    \item[] Answer: \answerYes{} 
    \item[] Justification: The implementation details and the experimental settings are all provided in Section~\ref{sec:experiments}.
    \item[] Guidelines:
    \begin{itemize}
        \item The answer NA means that the paper does not include experiments.
        \item If the paper includes experiments, a No answer to this question will not be perceived well by the reviewers: Making the paper reproducible is important, regardless of whether the code and data are provided or not.
        \item If the contribution is a dataset and/or model, the authors should describe the steps taken to make their results reproducible or verifiable. 
        \item Depending on the contribution, reproducibility can be accomplished in various ways. For example, if the contribution is a novel architecture, describing the architecture fully might suffice, or if the contribution is a specific model and empirical evaluation, it may be necessary to either make it possible for others to replicate the model with the same dataset, or provide access to the model. In general. releasing code and data is often one good way to accomplish this, but reproducibility can also be provided via detailed instructions for how to replicate the results, access to a hosted model (e.g., in the case of a large language model), releasing of a model checkpoint, or other means that are appropriate to the research performed.
        \item While NeurIPS does not require releasing code, the conference does require all submissions to provide some reasonable avenue for reproducibility, which may depend on the nature of the contribution. For example
        \begin{enumerate}
            \item If the contribution is primarily a new algorithm, the paper should make it clear how to reproduce that algorithm.
            \item If the contribution is primarily a new model architecture, the paper should describe the architecture clearly and fully.
            \item If the contribution is a new model (e.g., a large language model), then there should either be a way to access this model for reproducing the results or a way to reproduce the model (e.g., with an open-source dataset or instructions for how to construct the dataset).
            \item We recognize that reproducibility may be tricky in some cases, in which case authors are welcome to describe the particular way they provide for reproducibility. In the case of closed-source models, it may be that access to the model is limited in some way (e.g., to registered users), but it should be possible for other researchers to have some path to reproducing or verifying the results.
        \end{enumerate}
    \end{itemize}

\item {\bf Open access to data and code}
    \item[] Question: Does the paper provide open access to the data and code, with sufficient instructions to faithfully reproduce the main experimental results, as described in supplemental material?
    \item[] Answer: \answerNo{} 
    \item[] Justification: The code and the data will be publicly available.
    \item[] Guidelines:
    \begin{itemize}
        \item The answer NA means that paper does not include experiments requiring code.
        \item Please see the NeurIPS code and data submission guidelines (\url{https://nips.cc/public/guides/CodeSubmissionPolicy}) for more details.
        \item While we encourage the release of code and data, we understand that this might not be possible, so “No” is an acceptable answer. Papers cannot be rejected simply for not including code, unless this is central to the contribution (e.g., for a new open-source benchmark).
        \item The instructions should contain the exact command and environment needed to run to reproduce the results. See the NeurIPS code and data submission guidelines (\url{https://nips.cc/public/guides/CodeSubmissionPolicy}) for more details.
        \item The authors should provide instructions on data access and preparation, including how to access the raw data, preprocessed data, intermediate data, and generated data, etc.
        \item The authors should provide scripts to reproduce all experimental results for the new proposed method and baselines. If only a subset of experiments are reproducible, they should state which ones are omitted from the script and why.
        \item At submission time, to preserve anonymity, the authors should release anonymized versions (if applicable).
        \item Providing as much information as possible in supplemental material (appended to the paper) is recommended, but including URLs to data and code is permitted.
    \end{itemize}

\item {\bf Experimental Setting/Details}
    \item[] Question: Does the paper specify all the training and test details (e.g., data splits, hyperparameters, how they were chosen, type of optimizer, etc.) necessary to understand the results?
    \item[] Answer: \answerYes{} 
    \item[] Justification: See the experiment section for the setting and details.
    \item[] Guidelines:
    \begin{itemize}
        \item The answer NA means that the paper does not include experiments.
        \item The experimental setting should be presented in the core of the paper to a level of detail that is necessary to appreciate the results and make sense of them.
        \item The full details can be provided either with the code, in appendix, or as supplemental material.
    \end{itemize}

\item {\bf Experiment Statistical Significance}
    \item[] Question: Does the paper report error bars suitably and correctly defined or other appropriate information about the statistical significance of the experiments?
    \item[] Answer: \answerYes{} 
    \item[] Justification: The synthetic experiments were supported by error bars to show statistical significance.
    \item[] Guidelines:
    \begin{itemize}
        \item The answer NA means that the paper does not include experiments.
        \item The authors should answer "Yes" if the results are accompanied by error bars, confidence intervals, or statistical significance tests, at least for the experiments that support the main claims of the paper.
        \item The factors of variability that the error bars are capturing should be clearly stated (for example, train/test split, initialization, random drawing of some parameter, or overall run with given experimental conditions).
        \item The method for calculating the error bars should be explained (closed form formula, call to a library function, bootstrap, etc.)
        \item The assumptions made should be given (e.g., Normally distributed errors).
        \item It should be clear whether the error bar is the standard deviation or the standard error of the mean.
        \item It is OK to report 1-sigma error bars, but one should state it. The authors should preferably report a 2-sigma error bar than state that they have a 96\% CI, if the hypothesis of Normality of errors is not verified.
        \item For asymmetric distributions, the authors should be careful not to show in tables or figures symmetric error bars that would yield results that are out of range (e.g. negative error rates).
        \item If error bars are reported in tables or plots, The authors should explain in the text how they were calculated and reference the corresponding figures or tables in the text.
    \end{itemize}

\item {\bf Experiments Compute Resources}
    \item[] Question: For each experiment, does the paper provide sufficient information on the computer resources (type of compute workers, memory, time of execution) needed to reproduce the experiments?
    \item[] Answer: \answerYes{} 
    \item[] Justification: See Section~\ref{sec:experiments}.
    \item[] Guidelines:
    \begin{itemize}
        \item The answer NA means that the paper does not include experiments.
        \item The paper should indicate the type of compute workers CPU or GPU, internal cluster, or cloud provider, including relevant memory and storage.
        \item The paper should provide the amount of compute required for each of the individual experimental runs as well as estimate the total compute. 
        \item The paper should disclose whether the full research project required more compute than the experiments reported in the paper (e.g., preliminary or failed experiments that didn't make it into the paper). 
    \end{itemize}
    
\item {\bf Code Of Ethics}
    \item[] Question: Does the research conducted in the paper conform, in every respect, with the NeurIPS Code of Ethics \url{https://neurips.cc/public/EthicsGuidelines}?
    \item[] Answer: \answerYes{} 
    \item[] Justification: The research conducted in the paper conform, in every respect, with the NeurIPS Code of Ethics.
    \item[] Guidelines:
    \begin{itemize}
        \item The answer NA means that the authors have not reviewed the NeurIPS Code of Ethics.
        \item If the authors answer No, they should explain the special circumstances that require a deviation from the Code of Ethics.
        \item The authors should make sure to preserve anonymity (e.g., if there is a special consideration due to laws or regulations in their jurisdiction).
    \end{itemize}

\item {\bf Broader Impacts}
    \item[] Question: Does the paper discuss both potential positive societal impacts and negative societal impacts of the work performed?
    \item[] Answer: \answerYes{} 
    \item[] Justification: See Appendix~\ref{appendix: broader impacts}.
    \item[] Guidelines:
    \begin{itemize}
        \item The answer NA means that there is no societal impact of the work performed.
        \item If the authors answer NA or No, they should explain why their work has no societal impact or why the paper does not address societal impact.
        \item Examples of negative societal impacts include potential malicious or unintended uses (e.g., disinformation, generating fake profiles, surveillance), fairness considerations (e.g., deployment of technologies that could make decisions that unfairly impact specific groups), privacy considerations, and security considerations.
        \item The conference expects that many papers will be foundational research and not tied to particular applications, let alone deployments. However, if there is a direct path to any negative applications, the authors should point it out. For example, it is legitimate to point out that an improvement in the quality of generative models could be used to generate deepfakes for disinformation. On the other hand, it is not needed to point out that a generic algorithm for optimizing neural networks could enable people to train models that generate Deepfakes faster.
        \item The authors should consider possible harms that could arise when the technology is being used as intended and functioning correctly, harms that could arise when the technology is being used as intended but gives incorrect results, and harms following from (intentional or unintentional) misuse of the technology.
        \item If there are negative societal impacts, the authors could also discuss possible mitigation strategies (e.g., gated release of models, providing defenses in addition to attacks, mechanisms for monitoring misuse, mechanisms to monitor how a system learns from feedback over time, improving the efficiency and accessibility of ML).
    \end{itemize}
    
\item {\bf Safeguards}
    \item[] Question: Does the paper describe safeguards that have been put in place for responsible release of data or models that have a high risk for misuse (e.g., pretrained language models, image generators, or scraped datasets)?
    \item[] Answer: \answerNA{} 
    \item[] Justification: It is not related to this work. No such risks.
    \item[] Guidelines:
    \begin{itemize}
        \item The answer NA means that the paper poses no such risks.
        \item Released models that have a high risk for misuse or dual-use should be released with necessary safeguards to allow for controlled use of the model, for example by requiring that users adhere to usage guidelines or restrictions to access the model or implementing safety filters. 
        \item Datasets that have been scraped from the Internet could pose safety risks. The authors should describe how they avoided releasing unsafe images.
        \item We recognize that providing effective safeguards is challenging, and many papers do not require this, but we encourage authors to take this into account and make a best faith effort.
    \end{itemize}

\item {\bf Licenses for existing assets}
    \item[] Question: Are the creators or original owners of assets (e.g., code, data, models), used in the paper, properly credited and are the license and terms of use explicitly mentioned and properly respected?
    \item[] Answer: \answerYes{} 
    \item[] Justification: All the assets are properly credited.
    \item[] Guidelines:
    \begin{itemize}
        \item The answer NA means that the paper does not use existing assets.
        \item The authors should cite the original paper that produced the code package or dataset.
        \item The authors should state which version of the asset is used and, if possible, include a URL.
        \item The name of the license (e.g., CC-BY 4.0) should be included for each asset.
        \item For scraped data from a particular source (e.g., website), the copyright and terms of service of that source should be provided.
        \item If assets are released, the license, copyright information, and terms of use in the package should be provided. For popular datasets, \url{paperswithcode.com/datasets} has curated licenses for some datasets. Their licensing guide can help determine the license of a dataset.
        \item For existing datasets that are re-packaged, both the original license and the license of the derived asset (if it has changed) should be provided.
        \item If this information is not available online, the authors are encouraged to reach out to the asset's creators.
    \end{itemize}

\item {\bf New Assets}
    \item[] Question: Are new assets introduced in the paper well documented and is the documentation provided alongside the assets?
    \item[] Answer: \answerNA{} 
    \item[] Justification: No new assets.
    \item[] Guidelines:
    \begin{itemize}
        \item The answer NA means that the paper does not release new assets.
        \item Researchers should communicate the details of the dataset/code/model as part of their submissions via structured templates. This includes details about training, license, limitations, etc. 
        \item The paper should discuss whether and how consent was obtained from people whose asset is used.
        \item At submission time, remember to anonymize your assets (if applicable). You can either create an anonymized URL or include an anonymized zip file.
    \end{itemize}

\item {\bf Crowdsourcing and Research with Human Subjects}
    \item[] Question: For crowdsourcing experiments and research with human subjects, does the paper include the full text of instructions given to participants and screenshots, if applicable, as well as details about compensation (if any)? 
    \item[] Answer: \answerNA{} 
    \item[] Justification: Not involved.
    \item[] Guidelines:
    \begin{itemize}
        \item The answer NA means that the paper does not involve crowdsourcing nor research with human subjects.
        \item Including this information in the supplemental material is fine, but if the main contribution of the paper involves human subjects, then as much detail as possible should be included in the main paper. 
        \item According to the NeurIPS Code of Ethics, workers involved in data collection, curation, or other labor should be paid at least the minimum wage in the country of the data collector. 
    \end{itemize}

\item {\bf Institutional Review Board (IRB) Approvals or Equivalent for Research with Human Subjects}
    \item[] Question: Does the paper describe potential risks incurred by study participants, whether such risks were disclosed to the subjects, and whether Institutional Review Board (IRB) approvals (or an equivalent approval/review based on the requirements of your country or institution) were obtained?
    \item[] Answer: \answerNA{} 
    \item[] Justification: Not involved.
    \item[] Guidelines:
    \begin{itemize}
        \item The answer NA means that the paper does not involve crowdsourcing nor research with human subjects.
        \item Depending on the country in which research is conducted, IRB approval (or equivalent) may be required for any human subjects research. If you obtained IRB approval, you should clearly state this in the paper. 
        \item We recognize that the procedures for this may vary significantly between institutions and locations, and we expect authors to adhere to the NeurIPS Code of Ethics and the guidelines for their institution. 
        \item For initial submissions, do not include any information that would break anonymity (if applicable), such as the institution conducting the review.
    \end{itemize}

\end{enumerate}
\clearpage

\end{document}